\documentclass{article} 
\usepackage[dvipsnames,table,xcdraw]{xcolor}
\usepackage{iclr2024_conference,times}


\usepackage{amsmath,amsfonts,bm}









\def\eqref#1{equation~\ref{#1}}









\def\1{\bm{1}}










\DeclareMathAlphabet{\mathsfit}{\encodingdefault}{\sfdefault}{m}{sl}
\SetMathAlphabet{\mathsfit}{bold}{\encodingdefault}{\sfdefault}{bx}{n}













\usepackage{booktabs}
\usepackage{url}
\usepackage{algpseudocode}
\usepackage{graphicx}
\usepackage{multirow}
\usepackage{colortbl}  
\usepackage{array}
\usepackage{overpic}
\usepackage{bbding}
\usepackage{makecell}
\usepackage{wrapfig}

\usepackage[ruled,linesnumbered]{algorithm2e}

\usepackage{amsmath,amssymb}

\usepackage{amsthm}
\newtheorem{theorem}{Theorem}
\newtheorem{lemma}{Lemma}
\theoremstyle{definition}
\newtheorem{definition}{Definition}[section]

\SetKwComment{Comment}{/* }{ */}

\usepackage[pagebackref=true,breaklinks=true,colorlinks,citecolor=ForestGreen]{hyperref}

\RestyleAlgo{ruled}
\title{Less is More: Fewer Interpretable Region via Submodular Subset Selection}



\author{
Ruoyu Chen$^{1,2}$, Hua Zhang$^{1,2,}$\thanks{Corresponding Author}~, Siyuan Liang$^{3}$, Jingzhi Li$^{1,2}$, Xiaochun Cao$^{4}$ \\
$^{1}$Institute of Information Engineering, Chinese Academy of Sciences, Beijing 100093, China\\
$^{2}$School of Cyber Security, University of Chinese Academy of Sciences, Beijing 100049, China\\
\texttt{\{chenruoyu,zhanghua,lijingzhi\}@iie.ac.cn} \\
$^{3}$School of Computing, National University of Singapore, 119077, Singapore \\
\texttt{pandaliang521@gmail.com} \\
$^{4}$School of Cyber Science and Technology, Shenzhen Campus of Sun Yat-sen University, \\~~Shenzhen 518107, China \\
\texttt{caoxiaochun@mail.sysu.edu.cn} \\
}

%

\iclrfinalcopy 
\begin{document}

\maketitle
\vspace{-20pt}
\begin{abstract}
Image attribution algorithms aim to identify important regions that are highly relevant to model decisions. Although existing attribution solutions can effectively assign importance to target elements, they still face the following challenges: 1) existing attribution methods generate inaccurate small regions thus misleading the direction of correct attribution, and 2) the model cannot produce good attribution results for samples with wrong predictions. To address the above challenges, this paper re-models the above image attribution problem as a submodular subset selection problem, aiming to enhance model interpretability using fewer regions. To address the lack of attention to local regions, we construct a novel submodular function to discover more accurate small interpretation regions. To enhance the attribution effect for all samples, we also impose four different constraints on the selection of sub-regions, i.e., confidence, effectiveness, consistency, and collaboration scores, to assess the importance of various subsets. Moreover, our theoretical analysis substantiates that the proposed function is in fact submodular. Extensive experiments show that the proposed method outperforms SOTA methods on two face datasets (Celeb-A and VGG-Face2) and one fine-grained dataset (CUB-200-2011). For correctly predicted samples, the proposed method improves the Deletion and Insertion scores with an average of 4.9\% and 2.5\% gain relative to HSIC-Attribution. For incorrectly predicted samples, our method achieves gains of 81.0\% and 18.4\% compared to the HSIC-Attribution algorithm in the average highest confidence and Insertion score respectively. The code is released at \href{https://github.com/RuoyuChen10/SMDL-Attribution}{https://github.com/RuoyuChen10/SMDL-Attribution}.
\end{abstract}

\section{Introduction}

Building transparent and explainable artificial intelligence (XAI) models is crucial for humans to reasonably and effectively exploit artificial intelligence~\citep{dwivedi2023explainable,ya2024towards,li2021identity,tu2023many,liang2022large,liang2022imitated,liang2023badclip}. Solving the interpretability and security problems of AI has become an urgent challenge in the current field of AI research~\citep{arrieta2020explainable, chen2023sim2word, zhao2023tuning,liang2023exploring,liang2024poisoned,li2023intentqa}.
Image attribution algorithms are typical interpretable methods, which produce saliency maps that explain which image regions are more important to model decisions. It provides a deeper understanding of the operating mechanisms of deep models, aids in identifying sources of prediction errors, and contributes to model improvement. Image attribution algorithms encompass two primary categories: white-box methods~\citep{chattopadhay2018grad,wang2020score} chiefly attribute on the basis of the model’s internal characteristics or decision gradients, and black-box methods~\citep{petsiuk2018rise,novello2022making}, which primarily analyze the significance of input regions via external perturbations.

Although attribution algorithms have been well studied, they still face two following limitations: 1) Some attribution regions are not small and dense enough which may interfere with the optimization orientation. Most advanced attribution algorithms focus on understanding how the global image contributes to the prediction results of the deep model while ignoring the impact of local regions on the results. 2) It is difficult to effectively attribute the latent cause of the error to samples with incorrect predictions. For incorrectly predicted samples, although the attribution algorithm can assign the correct class, it does not limit the response to the incorrect class, so that the attributed region may still have a high response to the incorrect class.

To solve the above issues, we propose a novel explainable method that reformulates the attribution problem as a submodular subset selection problem. We hypothesize that local regions can achieve better interpretability, and hope to achieve higher interpretability with fewer regions. We first divide the image into multiple sub-regions and achieve attribution by selecting a fixed number of sub-regions to achieve the best interpretability. \emph{To alleviate the insufficient dense of the attribution region}, we employ regional search to continuously expand the sub-region set. Furthermore, we introduce a new attribution mechanism to excavate what regions promote interpretability from four aspects, i.e., the prediction confidence of the model, the effectiveness of regional semantics, the consistency of semantics, and the collective effect of the region. Based on these useful clues, we design a submodular function to evaluate the significance of various subsets, which can \emph{limit the search for regions with wrong class responses}. As shown in Fig.~\ref{motivation}, we find that for correctly predicted samples, the proposed method can obtain higher prediction confidence with only a few regions as input than with all regions as input, and for incorrectly predicted samples, our method has the ability to find the reasons (e.g., the dark background region) that caused its prediction error.

We validate the effectiveness of our method on two face datasets, i.e., Celeb-A~\citep{liu2015deep}, VGG-Face2~\citep{cao2018vggface2}, and a fine-grained dataset CUB-200-2011~\citep{welindercaltech}. For correctly predicted samples, compared with the SOTA method HSIC-Attribution~\citep{novello2022making}, our method improves the Deletion scores by 8.4\%, 1.0\%, and 5.3\%, and the Insertion scores by 1.1\%, 0.2\%, and 6.1\%, respectively. For incorrectly predicted samples, our method excels in identifying the reasons behind the model's decision-making errors. In particular, compared with the SOTA method HSIC-Attribution, the average highest confidence is increased by 81.0\%, and the Insertion score is increased by 18.4\% on the CUB-200-2011 dataset. Furthermore, our analysis at the theoretical level confirms that the proposed function is indeed submodular. Please see Appendix~\ref{notation} for a list of important notations employed in this article.

Our contributions can be summarized as follows:
\begin{itemize}
    \item We reformulate the attribution problem as a submodular subset selection problem that achieves higher interpretability with fewer dense regions.
    \item A novel submodular mechanism is constructed to evaluate the significance of various subsets from four different aspects, i.e., the confidence, effectiveness, consistency, and collaboration scores. It not only improves the dense of attribution regions for existing attribution algorithms but also better discovers the causes of image prediction errors.
    \item The proposed method has good versatility which can improve the interpretability of both on the face and fine-grained datasets. Experimental results show that it can achieve better attribution results for the correctly predicted samples and effectively discover the causes of model prediction errors for incorrectly predicted samples.
\end{itemize}

\begin{figure}[!t]
    \vspace{-30 pt}
    \centering
    \setlength{\abovecaptionskip}{0.cm}
    \includegraphics[width = \textwidth]{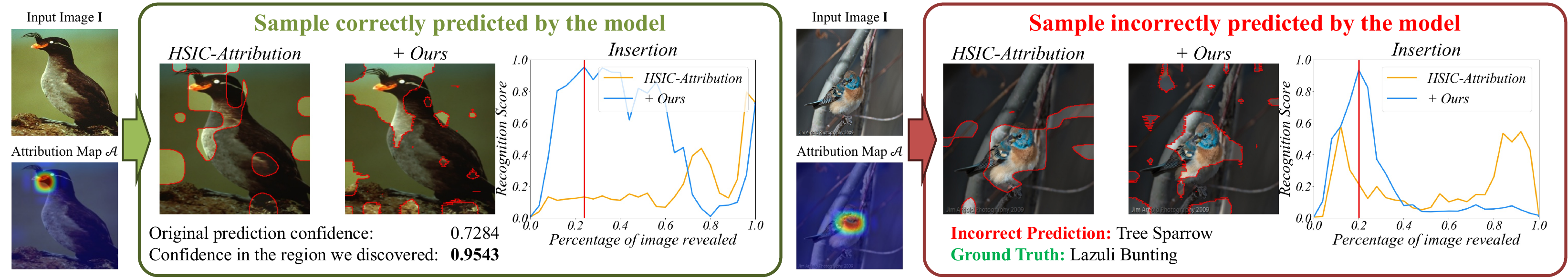}
    \caption{
        For correctly predicted samples, our method can find fewer regions that make the model predictions more confident. For incorrectly predicted samples, our method can find the reasons that caused the model to predict incorrectly.
    }
    \label{motivation}
    \vspace{-10 pt}
\end{figure}

\section{Related Work}

\paragraph{White-Box Attribution Method} Image attribution algorithms are designed to ascertain the importance of different input regions within an image with respect to the decision-making process of the model. White-box attribution algorithms primarily rely on computing gradients with respect to the model's decisions. Early research focused on assessing the importance of individual input pixels in image-based model decision-making~\citep{baehrens2010explain, simonyan2014deep}. Some methods propose gradient integration to address the vanishing gradient problem encountered in earlier approaches~\citep{sundararajan2017axiomatic,smilkov2017smoothgrad}. Recent popular methods often focus on feature-level activations in neural networks, such as CAM~\citep{zhou2016learning}, Grad-CAM~\citep{selvaraju2020grad}, Grad-CAM++~\citep{chattopadhay2018grad}, and Score-CAM~\citep{wang2020score}. However, these methods all rely on the selection of network layers in the model, which has a greater impact on the quality of interpretation~\citep{novello2022making}.

\paragraph{Black-Box Attribution Method} Black-box attribution methods assume that the internal structure of the model is agnostic. LIME~\citep{ribeiro2016should} locally approximates the predictions of a black-box model with a linear model, by only slightly perturbing the input. RISE~\citep{petsiuk2018rise} perturbs the model by inputting multiple random masks and weighting the masks to get the final saliency map. HSIC-Attribution method~\citep{novello2022making} measures the dependence between the input image and the model output through Hilbert Schmidt Independence Criterion (HSIC) based on RISE. \cite{petsiuk2021black} propose D-RISE to explain the object detection model's decision.

\paragraph{Submodular Optimization} Submodular optimization~\citep{fujishige2005submodular} has been successfully studied in multiple application scenarios~\citep{wei2014unsupervised, yang2019modeling, kothawade2022talisman, joseph2019submodular}, a small amount of work also uses its theory to do research related to model interpretability. \cite{elenberg2017streaming} frame the interpretability of black-box classifiers as a combinatorial maximization problem, it achieves similar results to LIME and is more efficient. \cite{chen2018learning} introduce instance-wise feature selection to explain the deep model. \cite{pervez2022scalable} proposed a simple subset sampling alternative based on conditional Poisson sampling, which they applied to interpret both image and text recognition tasks. However, these methods only retained the selected important pixels and observed the recognition accuracy~\citep{chen2018learning}. In this article, we elucidate our model based on submodular subset selection theory and attain state-of-the-art results, as assessed using standard metrics for attribution algorithm evaluation. Our method also effectively highlights factors leading to incorrect model decisions.

\section{Preliminaries}

In this section, we first establish some definitions. Considering a finite set $V$, given a set function $\mathcal{F}:2^{V} \to \mathbb{R}$ that maps any subset $S \subseteq V$ to a real value. When $\mathcal{F}$ is a submodular function, its definition is as follows:

\begin{definition}[Submodular function~\citep{edmonds1970submodular}]
    For any set $S_a \subseteq S_b \subseteq V$. Given an element $\alpha$, where $\alpha \in V \setminus S_{b}$. The set function $\mathcal{F}$ is a submodular function when it satisfies monotonically non-decreasing ($\mathcal{F}\left( S_{b} \cup \{\alpha\} \right) - \mathcal{F}\left( S_{b} \right) \ge 0$) and:
    \begin{equation}\label{eq:sub}
        \mathcal{F}\left( S_{a} \cup \{\alpha\} \right) - \mathcal{F}\left( S_{a} \right) \ge \mathcal{F}\left(S_{b} \cup \{ \alpha \}\right) - \mathcal{F}\left(S_{b} \right).
    \end{equation}
\end{definition}

\paragraph{Problem formulation} We divide an image $\mathbf{I}$ into a finite number of sub-regions, denoted as $V = \left\{ \mathbf{I}^{M}_{1}, \mathbf{I}^{M}_{2}, \cdots, \mathbf{I}^{M}_{m} \right\}$, where $M$ indicates a sub-region $\mathbf{I}^{M}$ formed by masking part of image $\mathbf{I}$. Giving a monotonically non-decreasing submodular function $\mathcal{F}:2^V \to \mathbb{R}$, the image recognition attribution problem can be viewed as maximizing the value $\mathcal{F}\left(S \right)$ with limited regions. Mathematically, the goal is to select a set $S$ consisting of a limited number $k$ of sub-regions in the set $V$ that maximize the submodular function $\mathcal{F}$:
\begin{equation}\label{eq:object}
    \max_{S \subseteq V, \left | S \right |  \le k}{\mathcal{F}(S)},
\end{equation}
we can transform the image attribution problem into a subset selection problem, where the submodular function $\mathcal{F}$ relates design to interpretability.

\section{Proposed Method}

In this section, we propose a novel method for image attribution based on submodular subset selection theory. In Sec.~\ref{subsec_subregion_division} we introduce how to perform sub-region division, in Sec.~\ref{submodular_function_design} we introduce our designed submodular function, and in Sec.~\ref{alg_fast} we introduce attribution algorithm based on greedy search. Fig.~\ref{framework} shows the overall framework of our approach.

\subsection{Sub-region Division}\label{subsec_subregion_division}

To obtain the interpretable region in an image, we partition the image $\mathbf{I} \in \mathbb{R}^{w \times h \times 3}$ into $m$ sub-regions $\mathbf{I}^M$, where $M$ indicates a sub-region $\mathbf{I}^{M}$ formed by masking part of image $\mathbf{I}$.
Traditional methods~\citep{noroozi2016unsupervised, redmon2018yolov3} typically divide images into regular patch areas, neglecting the semantic information inherent in different regions. In contrast, our method employs a sub-region division strategy that is informed and guided by an a priori saliency map.
In detail, we initially divide the image into $N \times N$ patch regions. Subsequently, an existing image attribution algorithm is applied to compute the saliency map $\mathcal{A}\in \mathbb{R}^{w \times h}$ for a corresponding class of $\mathbf{I}$. Following this, we resize $\mathcal{A}$ to a $N \times N$ dimension, where its values denote the importance of each patch. Based on the determined importance of each patch, $d$ patches are sequentially allocated to each sub-region $\mathbf{I}^M$, while the remaining patch regions are masked with $\mathbf{0}$, where $d = N \times N / m$. This process finally forms the element set $V = \left\{ \mathbf{I}^{M}_{1}, \mathbf{I}^{M}_{2}, \cdots, \mathbf{I}^{M}_{m} \right\}$, satisfying the condition $\mathbf{I} = \sum_{i=1}^{m}\mathbf{I}^{M}_{i}$. The detailed calculation process is outlined in Algorithm~\ref{alg:two}.

\begin{figure}
    \vspace{-30 pt}
    \centering
    \begin{overpic}[width=\textwidth,tics=8]{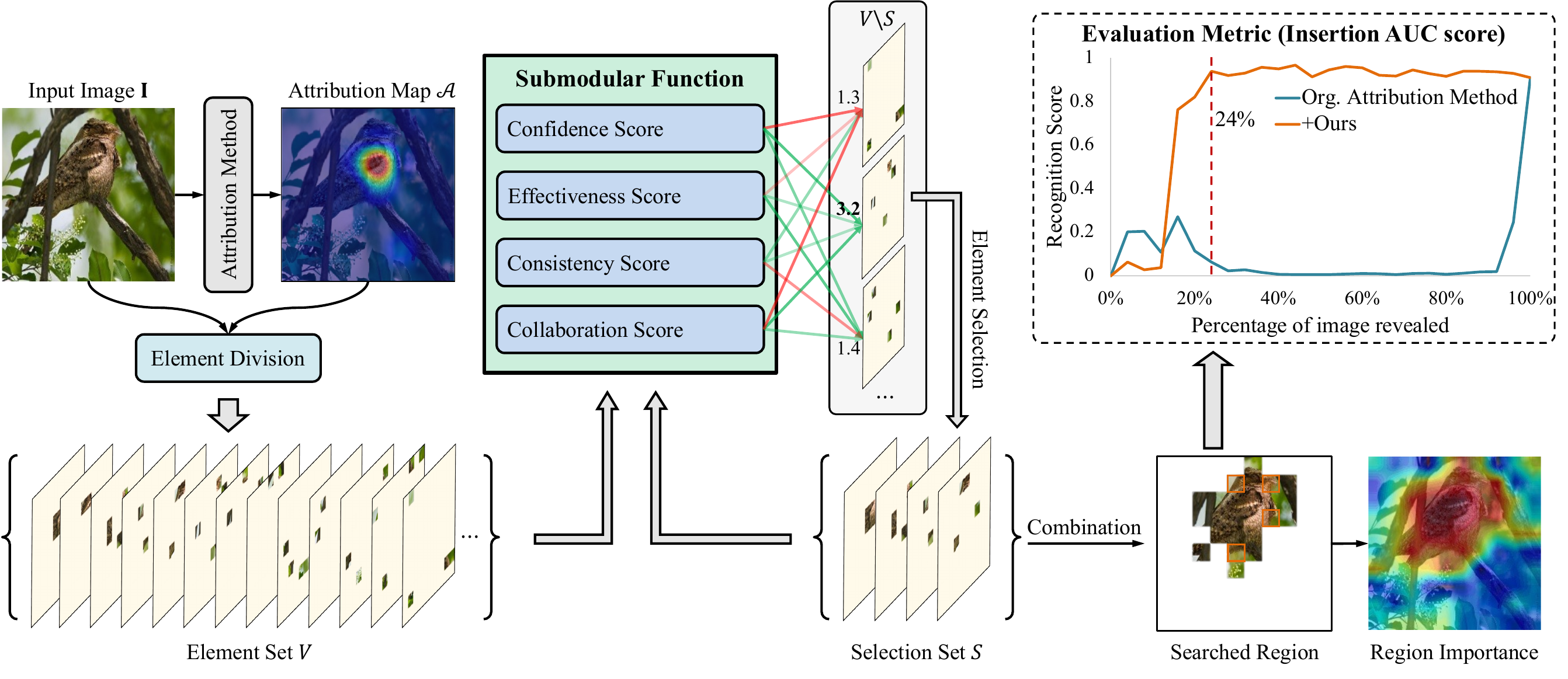}
        \put (44.1,34.7) {\tiny (Eq.~\ref{confidence_function})}
        \put (44.1,30.4) {\tiny (Eq.~\ref{r_function})}
        \put (44.1,26.1) {\tiny (Eq.~\ref{consistency_function})}
        \put (44.1,21.8) {\tiny (Eq.~\ref{collaboration_function})}
    \end{overpic}
    \caption{The framework of the proposed method.}
    \label{framework}
    \vspace{-5 pt}
\end{figure}

\subsection{Submodular Function Design}\label{submodular_function_design}

\paragraph{Confidence Score} We adopt a model trained by evidential deep learning (EDL) to quantify the uncertainty of a sample~\citep{sensoy2018evidential}, which is developed to overcome the limitation of softmax-based networks in the open-set environment~\citep{bao2021evidential}. We denote the network with prediction uncertainty by $F_u(\cdot)$. For $K$-class classification, assume that the predicted class probability values follow a Dirichlet distribution. When given a sample $\mathbf{x}$, the network is optimized by the following loss function:
\begin{equation}\label{eq:uncertainty}
    \mathcal{L}_{EDL} = \sum_{k_c = 1}^{K}{\mathbf{y}_{k_c} \left ( \log{S_{\mathrm{Dir}}} - \log{\left ( \mathbf{e}_{k_c}+1 \right )} \right )},
\end{equation}
where $\mathbf{y}$ is the one-hot label of the sample $\mathbf{x}$, and $\mathbf{e} \in \mathbb{R}^{K}$ is the output and the learned evidence of the network, denoted as $\mathbf{e} = \exp \left( F_{u}\left ( \mathbf{x} \right ) \right)$. $S_{\mathrm{Dir}} = \sum_{k_c=1}^{K}{\left (  \mathbf{e}_{k_c}+1 \right )}$ is referred to as the Dirichlet strength. In the inference, the predictive uncertainty can be calculated as $u = K / S_{\mathrm{Dir}}$, where $u \in [0, 1]$. Thus, the confidence score of the sample $\mathbf{x}$ predicted by the network can be expressed as:
\begin{equation}\label{confidence_function}
    s_{\mathrm{conf.}} \left ( \mathbf{x} \right ) = 1 - u = 1 - \frac{K}{\sum_{k_c=1}^{K}{\left ( \mathbf{e}_{k_c}+1 \right )}}.
\end{equation}

By incorporating the $s_{\mathrm{conf.}}$, we can ensure that the selected regions align closely with the In-Distribution (InD). This score acts as a reliable metric to distinguish regions from out-of-distribution, ensuring alignment with the InD.

\paragraph{Effectiveness Score} We expect to maximize the response of valuable information with fewer regions since some image regions have the same semantic representation. Given an element $\alpha$, and a sub-set $S$, we measure the distance between the element $\alpha$ and all elements in the set, and calculate the smallest distance, as the effectiveness score of the judgment element $\alpha$ for the sub-set $S$:
\begin{equation}\label{r_function_single}
    s_{e} \left ( \alpha \mid S \right ) = \min_{s_i \in S}{\mathrm{dist}\left( F\left(\alpha\right),   F\left(s_i\right)\right)},
\end{equation}
where $\mathrm{dist}(\cdot, \cdot)$ denotes the equation to calculate the distance between two elements.
Traditional distance measurement methods~\citep{wang2018cosface, deng2019arcface} are tailored to maximize the decision margins between classes during model training, involving operations like feature scaling and increasing angle margins. In contrast, our method focuses solely on calculating the relative distance between features, for which we utilize the general cosine distance.
$F(\cdot)$ denotes a pre-trained feature extractor. To calculate the element effectiveness score of a set, we can compute the sum of the effectiveness scores for each element:
\begin{equation}\label{r_function}
    s_{\mathrm{eff.}} \left ( S \right ) = \sum_{s_{i} \in S } \min_{s_j \in S, s_i \ne s_j}{\mathrm{dist}\left(  F\left(s_i\right), F\left(s_j\right)\right)}.
\end{equation}

By incorporating the $s_{\mathrm{eff.}}$, we aim to limit the selection of regions with similar semantic representations, thereby increasing the diversity and improving the overall quality of region selection.

\paragraph{Consistency Score} As the scores mentioned previously may be maximized by regions that aren’t target-dependent, we aim to make the representation of the identified image region consistent with the original semantics. Given a target semantic feature, $\boldsymbol{f}_{s}$, we make the semantic features of the searched image region close to the target semantic features. We introduce the consistency score:
\begin{equation}\label{consistency_function}
    s_{\mathrm{cons.}} \left ( S, \boldsymbol{f}_{s} \right ) = \frac{ F\left ( \sum_{\mathbf{I}^{M} \in S}\mathbf{I}^{M} \right ) \cdot  \boldsymbol{f}_{s}}{\left \| F \left ( \sum_{\mathbf{I}^{M} \in S}\mathbf{I}^{M} \right ) \right \|   \left \|  \boldsymbol{f}_{s}   \right \|},
\end{equation}
where $F(\cdot)$ denotes a pre-trained feature extractor. The target semantic feature $\boldsymbol{f}_{s}$, can either adopt the features computed from the original image using the pre-trained feature extractor, expressed as $\boldsymbol{f}_{s} = F \left ( \mathbf{I} \right )$, or directly implement the fully connected layer of the classifier for a specified class.
By incorporating the $s_{\mathrm{cons.}}$, our method targets regions that reinforce the desired semantic response. This approach ensures a precise selection that aligns closely with our specific semantic goals.

\paragraph{Collaboration Score} Some individual elements may lack significant individual effects in isolation, but when placed within the context of a group or system, they exhibit an indispensable collective effect. Therefore, we introduce the collaboration score, which is defined as:
\begin{equation}\label{collaboration_function}
    s_{\mathrm{colla.}} \left ( S, \mathbf{I}, \boldsymbol{f}_{s} \right ) = 1 - \frac{ F \left (\mathbf{I} - \sum_{\mathbf{I}^{M} \in S}\mathbf{I}^{M} \right ) \cdot  \boldsymbol{f}_{s}}{\left \| F \left (\mathbf{I} - \sum_{\mathbf{I}^{M} \in S}\mathbf{I}^{M} \right ) \right \|   \left \|  \boldsymbol{f}_{s}   \right \|},
\end{equation}
where $F(\cdot)$ denotes a pre-trained feature extractor, $\boldsymbol{f}_{s}$ is the target semantic feature. By introducing the collaboration score, we can judge the collective effect of the element. By incorporating the $s_{\mathrm{colla.}}$, our method pinpoints regions whose exclusion markedly affects the model's predictive confidence. This effect underscores the pivotal role of these regions, indicative of a significant collective impact. Such a metric is particularly valuable in the initial stages of the search, highlighting regions essential for sustaining the model's accuracy and reliability.

\paragraph{Submodular Function} We construct our objective function for selecting elements through a combination of the above scores, $\mathcal{F}(S)$, as follows:
\begin{equation}\label{submodular_function}
    \mathcal{F}(S) = \lambda_{1} s_{\mathrm{conf.}} \left ( \sum_{\mathbf{I}^{M} \in S}\mathbf{I}^{M} \right ) + \lambda_{2} s_{\mathrm{eff.}} \left ( S \right ) + \lambda_{3} s_{\mathrm{cons.}}\left ( S, \boldsymbol{f}_{s} \right ) + \lambda_{4} s_{\mathrm{colla.}}\left ( S, \mathbf{I}, \boldsymbol{f}_{s} \right ),
\end{equation}
where $\lambda_{1}, \lambda_{2}, \lambda_{3}$, and $\lambda_{4}$ represent the weighting factors used to balance each score. To simplify parameter adjustment, all weighting coefficients are set to 1 by default.

\begin{lemma}\label{lemma_submodular}
Consider two sub-sets $S_{a}$ and $S_{b}$ in set $V$, where $S_{a} \subseteq S_{b} \subseteq V$. Given an element $\alpha$, where $\alpha \in V \setminus S_{b}$. Assuming that $\alpha$ is contributing to model interpretation,
then, the function $\mathcal{F}(\cdot)$ in Eq.~\ref{submodular_function} is a submodular function and satisfies Eq.~\ref{eq:sub}.
\end{lemma}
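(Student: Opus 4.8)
The plan is to prove that $\mathcal{F}$ is submodular by decomposing it into the four additively-combined terms and verifying that each term is both monotonically non-decreasing and satisfies the diminishing-returns inequality (Eq.~\ref{eq:sub}); since submodularity and monotonicity are both preserved under non-negative linear combinations, establishing the claim term-by-term suffices. First I would observe that the effectiveness score $s_{\mathrm{eff.}}(S)$ is the only term with a genuinely combinatorial structure: it is a sum over $s_i \in S$ of $\min_{s_j \in S, s_j \ne s_i} \mathrm{dist}(F(s_i), F(s_j))$. For this term I would argue directly from the definition of the marginal gain. Adding an element $\alpha$ to a set can only decrease (or keep equal) each existing element's nearest-neighbor distance, and it contributes one new term $\min_{s_i \in S}\mathrm{dist}(F(\alpha), F(s_i))$. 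Comparing the marginal gain $s_{\mathrm{eff.}}(S \cup \{\alpha\}) - s_{\mathrm{eff.}}(S)$ for $S = S_a$ versus $S = S_b$ with $S_a \subseteq S_b$: the new $\alpha$-term is smaller for the larger set $S_b$ (more candidates to minimize over), and the perturbations to existing elements' distances are also controlled in the right direction. I would make this precise by splitting the gain into the $\alpha$-contribution plus the sum of decreases to the old elements, bounding each piece.

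Next I would handle the three ``global-feature'' scores $s_{\mathrm{conf.}}$, $s_{\mathrm{cons.}}$, and $s_{\mathrm{colla.}}$. The key observation here is that each of these depends on $S$ only through the aggregate masked image $\mathbf{I}_S := \sum_{\mathbf{I}^M \in S} \mathbf{I}^M$ (or its complement $\mathbf{I} - \mathbf{I}_S$). Under the ``$\alpha$ is contributing to model interpretation'' hypothesis stated in the lemma, I would interpret this as: adding $\alpha$ increases the confidence score and the consistency score and increases the collaboration score (equivalently, removing more of the target-relevant content from the complement drives its alignment with $\boldsymbol{f}_s$ down), which gives monotonicity. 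For the diminishing-returns inequality I would appeal to a saturation/concavity argument: as the set $S$ grows toward $V$, the aggregate image $\mathbf{I}_S$ approaches the full image $\mathbf{I}$, the feature $F(\mathbf{I}_S)$ saturates toward $F(\mathbf{I})$, and hence the cosine-similarity-based quantities in Eqs.~\ref{confidence_function}, \ref{consistency_function}, \ref{collaboration_function} have marginal gains that shrink as $S$ enlarges; I would phrase each of these as: the marginal increment of the underlying score is a non-increasing function of the ``amount of information already accumulated,'' so the increment for $S_a$ dominates that for $S_b \supseteq S_a$.

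The main obstacle, I expect, is that the three global-feature terms are \emph{not} submodular in full generality without extra assumptions --- cosine similarity of a sum of vectors is not a concave set function a priori, and the confidence score built from the Dirichlet strength is a nonlinear function of the evidence vector, which itself depends nonlinearly on $\mathbf{I}_S$. So the real work is identifying the precise regularity conditions (monotone, concave-along-chains behavior of $F$ composed with the relevant similarity) under which the argument goes through, and being honest that the hypothesis ``$\alpha$ is contributing to model interpretation'' is doing real work in supplying both the monotonicity and the ordering of marginal gains. I would therefore state these structural assumptions explicitly at the start of the proof, then verify Eq.~\ref{eq:sub} for each of the four terms under those assumptions, and finally invoke closure of submodularity under the non-negative combination with weights $\lambda_1, \lambda_2, \lambda_3, \lambda_4 \ge 0$ to conclude that $\mathcal{F}$ in Eq.~\ref{submodular_function} is submodular and monotonically non-decreasing, hence satisfies Eq.~\ref{eq:sub}.
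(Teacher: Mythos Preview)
Your treatment of the effectiveness score $s_{\mathrm{eff.}}$ matches the paper's: both split the marginal gain into the new $\alpha$-term $\min_{s_i\in S}\mathrm{dist}(F(\alpha),F(s_i))$ (which is no larger for $S_b\supseteq S_a$) and the aggregate reduction to existing elements' nearest-neighbor distances (the paper names these $\varepsilon_a\le\varepsilon_b$), and conclude Eq.~\ref{eq:sub} from those two comparisons.

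Where you diverge is on the three global-feature scores $s_{\mathrm{conf.}}$, $s_{\mathrm{cons.}}$, $s_{\mathrm{colla.}}$. You propose a saturation/concavity argument: as $S$ grows toward $V$ the aggregate image $\mathbf{I}_S$ approaches $\mathbf{I}$, the similarity-based quantities saturate, and marginal gains shrink. The paper does \emph{not} argue diminishing returns for these terms at all; instead it uses a first-order Taylor expansion $F(S+\alpha)\approx F(S)+\nabla F(S)\cdot\alpha$ together with the modeling assumption that each sub-region $\alpha$ is small and spatially disjoint from $S$, so that each marginal gain is itself $\approx 0$. The difference of marginals for $S_a$ and $S_b$ is then $\approx 0$, and the submodular inequality is carried entirely by $s_{\mathrm{eff.}}$. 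Your route, if the concavity assumptions can be justified, would yield a genuine inequality and makes the needed structural hypotheses explicit (you correctly flag that cosine-similarity-of-sums is not concave a priori); the paper's route sidesteps concavity altogether but trades it for an approximation argument whose validity rests on how finely the image is partitioned rather than on any intrinsic property of $\mathcal{F}$.
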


\begin{proof}
Please see Appendix~\ref{proof_of_submodular_lemma} for the proof.
\end{proof}

\begin{lemma}\label{lemma_monotonically}
Consider a subset $S$, given an element $\alpha$, assuming that $\alpha$ is contributing to model interpretation. The function $\mathcal{F}(\cdot)$ of Eq.~\ref{submodular_function} is monotonically non-decreasing.
\end{lemma}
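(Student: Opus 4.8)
The plan is to exploit the additive structure of $\mathcal{F}$ and reduce the claim to four elementary sign checks. Fix an arbitrary $S \subseteq V$ and $\alpha \in V \setminus S$, and write $\mathbf{I}_S := \sum_{\mathbf{I}^{M}\in S}\mathbf{I}^{M}$ and $\mathbf{I}_{S\cup\{\alpha\}} := \mathbf{I}_S + \mathbf{I}^{M}_{\alpha}$. By linearity of Eq.~\ref{submodular_function},
\begin{equation*}
\mathcal{F}(S\cup\{\alpha\}) - \mathcal{F}(S) = \lambda_{1}\,\Delta_{\mathrm{conf.}} + \lambda_{2}\,\Delta_{\mathrm{eff.}} + \lambda_{3}\,\Delta_{\mathrm{cons.}} + \lambda_{4}\,\Delta_{\mathrm{colla.}},
\end{equation*}
where each $\Delta$ is the marginal gain of the corresponding score. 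Since $\lambda_{1},\dots,\lambda_{4}\ge 0$, it suffices to show each increment is non-negative, and I would treat them one at a time, separating those that are non-negative for purely definitional or geometric reasons from those whose sign is supplied by the standing hypothesis that $\alpha$ ``contributes to model interpretation.''

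The effectiveness term is the structurally interesting one. Isolating the index $\alpha$ in the sum of Eq.~\ref{r_function} gives $s_{\mathrm{eff.}}(S\cup\{\alpha\}) = s_{e}(\alpha\mid S) + \sum_{s_{i}\in S}\min_{s_{j}\in(S\cup\{\alpha\})\setminus\{s_{i}\}}\mathrm{dist}(F(s_{i}),F(s_{j}))$, with $s_{e}(\alpha\mid S)$ as in Eq.~\ref{r_function_single}. The new term $s_{e}(\alpha\mid S) = \min_{s_{i}\in S}\mathrm{dist}(F(\alpha),F(s_{i}))\ge 0$ is a distance, hence non-negative (and for $|S|\le 1$ the empty-minimum convention $s_{\mathrm{eff.}}=0$ makes the first non-trivial increment manifestly positive). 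The subtlety is that every \emph{old} summand is now a minimum over one extra candidate, so it can only decrease; thus $\Delta_{\mathrm{eff.}} = s_{e}(\alpha\mid S) + (\text{non-positive correction})$. I expect this to be the main obstacle: in full generality a semantically redundant $\alpha$ whose feature lies very close to some $F(s_{i})$ could shrink the old nearest-neighbour distances and make $\Delta_{\mathrm{eff.}}$ negative, so the argument genuinely needs the ``contributing to interpretation'' assumption (equivalently, that $\alpha$ carries novel semantics and does not become the nearest neighbour of an already-selected region) to kill the correction term and conclude $\Delta_{\mathrm{eff.}} = s_{e}(\alpha\mid S)\ge 0$.

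For the remaining three scores the increments are governed by how the frozen networks respond as more of the true image is exposed. The confidence score $s_{\mathrm{conf.}} = 1 - K/S_{\mathrm{Dir}}$ of Eq.~\ref{confidence_function} is monotone increasing in the Dirichlet strength $S_{\mathrm{Dir}}$, so $\Delta_{\mathrm{conf.}}\ge 0$ precisely when augmenting $\mathbf{I}_{S}$ with $\mathbf{I}^{M}_{\alpha}$ does not reduce the accumulated evidence $\mathbf{e}=\exp(F_{u}(\cdot))$ for the predicted class — which is the content of the hypothesis on $\alpha$. The consistency score $s_{\mathrm{cons.}}$ of Eq.~\ref{consistency_function} is the cosine similarity between $F(\mathbf{I}_{S})$ and the target feature $\boldsymbol{f}_{s}$; adding $\alpha$ completes the partial image towards $\mathbf{I}$, pushing $F(\mathbf{I}_{S})$ towards $\boldsymbol{f}_{s}=F(\mathbf{I})$ (or the class weight vector), so $\Delta_{\mathrm{cons.}}\ge 0$. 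Finally, the collaboration score $s_{\mathrm{colla.}} = 1 - \cos(F(\mathbf{I}-\mathbf{I}_{S}),\boldsymbol{f}_{s})$ of Eq.~\ref{collaboration_function} depends on the complement $\mathbf{I}-\mathbf{I}_{S}$; passing from $S$ to $S\cup\{\alpha\}$ deletes $\mathbf{I}^{M}_{\alpha}$ from that complement, making it less aligned with $\boldsymbol{f}_{s}$, so $1-\cos(\cdot)$ does not decrease and $\Delta_{\mathrm{colla.}}\ge 0$. Summing the four non-negative increments against the non-negative weights yields $\mathcal{F}(S\cup\{\alpha\})-\mathcal{F}(S)\ge 0$; since $S$ and $\alpha$ were arbitrary, $\mathcal{F}$ is monotonically non-decreasing, which also discharges the monotonicity requirement used in the proof of Lemma~\ref{lemma_submodular}.
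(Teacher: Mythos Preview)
Your proposal is correct and follows essentially the same approach as the paper: decompose the marginal gain into the four score increments and argue each is non-negative under the hypothesis that $\alpha$ contributes to interpretation, with the effectiveness term requiring that hypothesis to control the non-positive correction caused by old nearest-neighbour distances shrinking (the paper calls this correction $\varepsilon$ and likewise dismisses it as small). The only cosmetic difference is that the paper makes the confidence, consistency, and collaboration increments explicit via first-order Taylor expansion of the network outputs, whereas you use direct monotonicity-in-$S_{\mathrm{Dir}}$ and alignment reasoning to reach the same conclusions.
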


\begin{proof}
Please see Appendix~\ref{proof_of_monotonically_lemma} for the proof.
\end{proof}

\subsection{Greedy Search Algorithm}\label{alg_fast}

\begin{algorithm}[hbt!]
    \scriptsize
    \caption{A greedy search based algorithm for interpretable region discovery}\label{alg:two}
    \KwIn{Image $\mathbf{I} \in \mathbb{R}^{w \times h \times 3}$, number of divided patches $N \times N$, a prior saliency map $\mathcal{A}\in \mathbb{R}^{w \times h \times 3}$ of $\mathbf{I}$, number of image division sub-regions $m$, maximum number of constituent regions $k$.}
    \KwOut{A set $S \subseteq V$, where $\left | S \right |  \le k$.}
    $V \gets \varnothing$ \Comment*[r]{Initiate the operation of sub-region division}
    $\mathcal{A} \gets \text{resize}\left(\mathcal{A}, \text{newRows}=N, \text{newCols}=N\right)$ \;
    $d = N \times N / m$ \;
    \For{$l=1$ \KwTo $m$}{
        $\mathbf{I}^{M}_{l} = \mathbf{I}$ \;
        \For{$i=1$ \KwTo $N$}{
            \For{$j=1$ \KwTo $N$}{
                $I_r=\text{rank}\left(\mathcal{A}, i, j \right)$ \Comment*[r]{Index of $\mathcal{A}_{i,j}$, ordered descendingly}
                \uIf {$I_r < (d-1) \times l$ \textup{\textbf{and}} $I_r > d \times l$}{
                    $\mathbf{I}^{M}_{l}\left[(I_r-1)\times h/N+1 : I_r\times h/N , (I_r-1)\times w/N+1 : I_r\times w/N \right] = \mathbf{0}$ \Comment*[r]{Mask non-relevant patch region, fill it with 0}
                }
            }
        }
        $V \gets V \cup \{\mathbf{I}^{M}_{l} \}$ \;
    }
    $S \gets \varnothing$ \Comment*[r]{Initiate the operation of submodular subset selection}
    \For{$i=1$ \KwTo $k$}{
        $S_d \gets V \backslash S$\;
        $\alpha \gets  \arg{\max_{\alpha \in S_d}{\mathcal{F} \left( S \cup \{ \alpha \} \right) }}$ \Comment*[r]{Optimize the submodular value}
        $S \gets S \cup \{ \alpha \}$
    }
    \Return $S$
\end{algorithm}

Given a set $V = \left\{ \mathbf{I}^{M}_{1}, \mathbf{I}^{M}_{2}, \cdots, \mathbf{I}^{M}_{m} \right\}$, we can follow Eq.~\ref{eq:object} to search the interpretable region by selecting $k$ elements that maximize the value of the submodular function $\mathcal{F}(\cdot)$. The above problem can be effectively addressed by implementing a greedy search algorithm. Referring to related works~\citep{mirzasoleiman2015lazier, wei2015submodularity}, we use Algorithm~\ref{alg:two} to optimize the value of the submodular function. Based on Lemma~\ref{lemma_submodular} and Lemma~\ref{lemma_monotonically} we have proved that the function $\mathcal{F}(\cdot)$ is a submodular function. According to the theory of \cite{nemhauser1978analysis}, we have:

\begin{theorem}[\citep{nemhauser1978analysis}]
Let $S$ denotes the solution obtained by the greedy search approach, and $S^{\ast}$ denotes the optimal solution. If $\mathcal{F}(\cdot)$ is a submodular function, then the solution $S$ has the following approximation guarantee:
\begin{equation}
    \mathcal{F}(S) \ge \left(1 - \frac{1}{\mathrm{e}} \right) \mathcal{F}(S^{\ast}),
\end{equation}
where $\mathrm{e}$ is the base of natural logarithm.
\end{theorem}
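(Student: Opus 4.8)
The plan is to reproduce the classical greedy analysis of \cite{nemhauser1978analysis}, specialized to the cardinality-constrained maximization in \eqref{eq:object}. Denote by $S_0 = \varnothing \subseteq S_1 \subseteq \cdots \subseteq S_k = S$ the iterates produced by the submodular subset selection loop of Algorithm~\ref{alg:two}, where $S_{i+1} = S_i \cup \{\alpha_{i+1}\}$ with $\alpha_{i+1} \in \argmax_{\alpha \in V \setminus S_i}\mathcal{F}(S_i \cup \{\alpha\})$. Let $S^{\ast}$ be an optimal solution; by monotonicity (Lemma~\ref{lemma_monotonically}) we may assume $|S^{\ast}| = k$ and write $S^{\ast} = \{o_1, \dots, o_k\}$. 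The target is the one-step contraction
\begin{equation*}
\mathcal{F}(S^{\ast}) - \mathcal{F}(S_{i+1}) \;\le\; \Big(1 - \tfrac{1}{k}\Big)\big(\mathcal{F}(S^{\ast}) - \mathcal{F}(S_i)\big), \qquad i = 0, 1, \dots, k-1,
\end{equation*}
after which the result follows by iterating.

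To establish the contraction, fix $i$ and expand $\mathcal{F}(S^{\ast} \cup S_i)$ along the elements of $S^{\ast}$. By monotonicity, $\mathcal{F}(S^{\ast}) \le \mathcal{F}(S^{\ast} \cup S_i)$; writing the latter as the telescoping sum $\mathcal{F}(S_i) + \sum_{j=1}^{k}\big[\mathcal{F}(S_i \cup \{o_1,\dots,o_j\}) - \mathcal{F}(S_i \cup \{o_1,\dots,o_{j-1}\})\big]$ and bounding each marginal gain from above by $\mathcal{F}(S_i \cup \{o_j\}) - \mathcal{F}(S_i)$ via submodularity (Lemma~\ref{lemma_submodular}, inequality \eqref{eq:sub}), we obtain $\mathcal{F}(S^{\ast}) \le \mathcal{F}(S_i) + \sum_{j=1}^{k}\big(\mathcal{F}(S_i \cup \{o_j\}) - \mathcal{F}(S_i)\big)$. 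Since the greedy step selects the element of largest marginal gain, every summand is at most $\mathcal{F}(S_{i+1}) - \mathcal{F}(S_i)$, hence $\mathcal{F}(S^{\ast}) - \mathcal{F}(S_i) \le k\big(\mathcal{F}(S_{i+1}) - \mathcal{F}(S_i)\big)$, which rearranges to the claimed contraction.

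Setting $\delta_i := \mathcal{F}(S^{\ast}) - \mathcal{F}(S_i)$, the contraction reads $\delta_{i+1} \le (1-\tfrac{1}{k})\,\delta_i$, so by induction $\delta_k \le (1-\tfrac{1}{k})^k\,\delta_0 = (1-\tfrac{1}{k})^k\big(\mathcal{F}(S^{\ast}) - \mathcal{F}(\varnothing)\big)$. After the harmless normalization $\mathcal{F}(\varnothing) \gets 0$ (subtracting this constant leaves \eqref{eq:object} and Algorithm~\ref{alg:two} unchanged), and using the elementary inequality $(1-\tfrac{1}{k})^k \le \mathrm{e}^{-1}$, this gives $\mathcal{F}(S^{\ast}) - \mathcal{F}(S) \le \mathrm{e}^{-1}\mathcal{F}(S^{\ast})$, that is, $\mathcal{F}(S) \ge (1 - \tfrac{1}{\mathrm{e}})\mathcal{F}(S^{\ast})$.

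The only delicate point — and the step I would write out most carefully — is the application of submodularity in the telescoping bound: the marginal gain of $o_j$ over the larger set $S_i \cup \{o_1,\dots,o_{j-1}\}$ must be compared with its gain over the smaller set $S_i$, which is precisely the direction of \eqref{eq:sub}; one should also note that terms with $o_j \in S_i$ contribute a nonnegative (in fact zero) marginal gain and are harmless, and check that the normalization $\mathcal{F}(\varnothing)=0$ is consistent with the hypotheses of Lemmas~\ref{lemma_submodular} and~\ref{lemma_monotonically}. Beyond this, the argument is the standard telescoping-plus-geometric-decay computation and requires no further input.
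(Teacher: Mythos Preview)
Your argument is the standard Nemhauser--Wolsey--Fisher proof and is correct as written; the telescoping bound, the use of submodularity in the right direction, the handling of $o_j \in S_i$, and the normalization $\mathcal{F}(\varnothing)=0$ are all in order. Note, however, that the paper does not actually prove this theorem: it is stated with attribution to \cite{nemhauser1978analysis} and invoked as a black box once Lemmas~\ref{lemma_submodular} and~\ref{lemma_monotonically} establish that $\mathcal{F}$ is monotone submodular, so there is no ``paper's own proof'' to compare against --- your write-up simply supplies the classical argument the paper defers to the literature.
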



\section{Experiments}

\subsection{Experimental Setup}

\paragraph{Datasets} We evaluate the proposed method on two face datasets Celeb-A~\citep{liu2015deep} and VGG-Face2~\citep{cao2018vggface2}, and a fine-grained dataset CUB-200-2011~\citep{welindercaltech}. Celeb-A dataset includes $10,177$ IDs, we randomly select $2,000$ identities from Celeb-A’s validation set, and one test face image for each identity is used to evaluate our method; the VGG-Face2 dataset includes $8,631$ IDs, we randomly select $2,000$ identities from VGG-Face2’s validation set, and one test face image for each identity is used to evaluate our method; CUB-200-2011 dataset includes $200$ bird species, we select 3 samples for each class that is correctly predicted by the model from the CUB-200-2011 validation set for $200$ classes, a total of $600$ images are used to evaluate the image attribution effect. Additionally, 2 incorrectly predicted samples from each class are selected, totaling $400$ images, to evaluate our method for identifying the cause of model prediction errors.

\paragraph{Evaluation Metric} We use Deletion and Insertion AUC scores~\citep{petsiuk2018rise} to evaluate the faithfulness of our method in explaining model predictions. To evaluate the ability of our method to search for causes of model prediction errors, we use the model's highest confidence in correct class predictions over different search ranges as the evaluation metric.

\paragraph{Baselines} We select the following popular attribution algorithm for calculating the prior saliency map in Sec.~\ref{subsec_subregion_division}, and also for comparison methods, including the white-box-based methods: Saliency~\citep{simonyan2014deep}, Grad-CAM~\citep{selvaraju2020grad}, Grad-CAM++~\citep{chattopadhay2018grad}, Score-CAM~\citep{wang2020score}, and the black-box-based methods: LIME~\citep{ribeiro2016should}, Kernel Shap~\citep{lundberg2017unified}, RISE~\citep{petsiuk2018rise} and HSIC-Attribution~\citep{novello2022making}.

\paragraph{Implementation Details}
Please see Appendix~\ref{implementation} for details.

\subsection{Faithfulness Analysis}

To highlight the superiority of our method, we evaluate its faithfulness, which gauges the consistency of the generated explanations with the deep model's decision-making process~\citep{li2021instance}. We use Deletion and Insertion AUC scores to form the evaluation metric, which measures the decrease (or increase) in class probability as important pixels (given by the saliency map) are gradually removed (or increased) from the image. Since our method is based on greedy search, we can determine the importance of the divided regions according to the order in which they are searched. We can evaluate the faithfulness of the method by comparing the Deletion or Insertion scores under different image perturbation scales (i.e. different $k$ sizes) with other attribution methods.
We set the search subset size $k$ to be the same as the set size $m$, and adjust the size of the divided regions according to the order of the returned elements. The image is perturbed to calculate the value of Deletion or Insertion of our method.

Table~\ref{insertion_deletion_score} shows the results on the Celeb-A, VGG-Face2, and CUB-200-2011 validation sets. We find that no matter which attribution method’s saliency map is used as the baseline, our method can improve its Deletion and Insertion scores.
The performance of our approach is also directly influenced by the sophistication of the underlying attribution algorithm, with more advanced algorithms leading to superior results.
The state-of-the-art method, HSIC-Attribution~\citep{novello2022making}, shows improvements when combined with our method. On the Celeb-A dataset, our method reduces its Deletion score by 8.4\% and improves its Insertion score by 1.1\%. Similarly, on the CUB-200-2011 dataset, our method enhances the HSIC-Attribution method by reducing its Deletion score by 5.3\% and improving its Insertion score by 6.1\%. On the VGG-Face2 dataset, our method enhances the HSIC-Attribution method by reducing its Deletion score by 1.0\% and improving its Insertion score by 0.2\%, this slight improvement may be caused by the large number of noisy images in the VGG-Face2 dataset. It is precise because we reformulate the image attribution problem into a subset selection problem that we can alleviate the insufficient fine-graininess of the attribution region, thus improving the faithfulness of existing attribution algorithms.
Our method exhibits significant improvements across a spectrum of vision tasks and attribution algorithms, thereby highlighting its extensive generalizability.
Please see Appendix~\ref{visualization_faithfulness} for visualizations of our method on these datasets.

\begin{table}[t]
    \vspace{-30 pt}
    \caption{Deletion and Insertion AUC scores on the Celeb-A, VGG-Face2, and CUB-200-2011 validation sets.}\vspace{-10pt}
    \label{insertion_deletion_score}
    \begin{center}
        \resizebox{\textwidth}{!}{
            \begin{tabular}{l|cc|cc|cc}
            \toprule
             & \multicolumn{2}{c}{Celeb-A}       & \multicolumn{2}{c}{VGGFace2}    & \multicolumn{2}{c}{CUB-200-2011}                                     \\ \cmidrule(lr){2-3} \cmidrule(lr){4-5} \cmidrule(l){6-7}
            \multirow{-2}{*}{Method} & Deletion ($\downarrow$)    & Insertion ($\uparrow$)   & Deletion ($\downarrow$) & Insertion ($\uparrow$) & Deletion ($\downarrow$)    & Insertion ($\uparrow$) \\ \midrule
            Saliency~\citep{simonyan2014deep} & 0.1453 & 0.4632 & 0.1907 & 0.5612 & 0.0682 & 0.6585  \\ \rowcolor{gray!20}
            Saliency~(w/ ours) & \textbf{0.1254} & \textbf{0.5465} & \textbf{0.1589} & \textbf{0.6287} & \textbf{0.0675} & \textbf{0.6927} \\
            Grad-CAM~\citep{selvaraju2020grad} & 0.2865 & 0.3721 & 0.3103 & 0.4733 & 0.0810 & 0.7224 \\ \rowcolor{gray!20}
            Grad-CAM~(w/ ours)  & \textbf{0.1549} & \textbf{0.4927} & \textbf{0.1982} & \textbf{0.5867} & \textbf{0.0726} & \textbf{0.7231} \\
            LIME~\citep{ribeiro2016should} & 0.1484 & 0.5246 & 0.2034 & 0.6185 &  0.1070  & 0.6812 \\ \rowcolor{gray!20}
            LIME~(w/ ours)  & \textbf{0.1366} & \textbf{0.5496} & \textbf{0.1653} & \textbf{0.6314} &  \textbf{0.0941}  & \textbf{0.6994} \\
            Kernel Shap~\citep{lundberg2017unified} & 0.1409 & 0.5246 & 0.2119 & 0.6132 &  0.1016  &   0.6763  \\ \rowcolor{gray!20}
            Kernel Shap~(w/ ours)  & \textbf{0.1352} & \textbf{0.5504} & \textbf{0.1669} & \textbf{0.6314} &  \textbf{0.0951}  &   \textbf{0.6920}  \\
            RISE~\citep{petsiuk2018rise} & 0.1444 & 0.5703 & 0.1375 & 0.6530 & 0.0665 & 0.7193 \\ \rowcolor{gray!20}
            RISE~(w/ ours) & \textbf{0.1264} & \textbf{0.5719} & \textbf{0.1346} & \textbf{0.6548} & \textbf{0.0630} & \textbf{0.7245} \\
            HSIC-Attribution~\citep{novello2022making} & 0.1151 & 0.5692    & 0.1317 &  0.6694    & 0.0647
  & 0.6843 \\ \rowcolor{gray!20}
            HSIC-Attribution~(w/ ours)            & \textbf{0.1054} & \textbf{0.5752} & \textbf{0.1304} & \textbf{0.6705} & \textbf{0.0613} & \textbf{0.7262} \\ \bottomrule
            \end{tabular}
        }
    \end{center}
\end{table}

\subsection{Discover the Causes of Incorrect Predictions}\vspace{-10 pt}

In this section, we analyze the images that are misclassified in the CUB-200-2011 validation set and try to discover the causes for their misclassification. We use Grad-CAM++, Score-CAM, and HSIC-Attribution as baseline. We specify its ground truth class and use our attribution algorithm to discover regions where the model's response improves on that class as much as possible. We evaluate the performance of each method using the highest confidence score found within different search ranges. For instance, we determine the highest category prediction confidence that the algorithm can discover by searching up to 25\% of the image region (i.e., set $k=0.25m$). We also introduce the Insertion AUC score as an evaluation metric.

Table~\ref{discover_mistake} shows the results for samples that were misclassified in the CUB-200-2011 validation set, we find that our method achieves significant improvements. The average highest confidence score searched by our method in any search interval is higher than the baseline. In the global search interval (0-100\%), the average highest confidence level searched by our method is improved by 58.4\% on Grad-CAM++, 62.6\% on Score-CAM, and 81.0\% on HSIC-Attribution. Its Insertion score has also been significantly improved, increasing by 52.8\% on Grad-CAM++, 51.2\% on Score-CAM, and 18.4\% on HSIC-Attribution.
In another scenario, our method does not rely on the saliency map of the attribution algorithm as a priori. Instead, it divides the image into $N \times N$ patches, where each element in the set contains only one patch. In this case, our method achieves a higher average highest confidence score, in the global search interval (0-100\%), our method (divide the image into $10 \times 10$ patches) is 97.8\% better than Grad-CAM++, 108.6\% better than Score-CAM, and 110.1\% better than HSIC-Attribution. Fig.~\ref{Debug_image} shows some visualization results, we compare our method with the SOTA method HSIC-Attribution.
The Insertion curve represents the relationship between the region searched by the methods and the ground truth class prediction confidence.
We find that our method can search for regions with higher confidence scores predicted by the model than the HSIC-Attribution method with a small percentage of the searched image region. The highlighted region shown in the figure can be considered as the cause of the correct prediction of the model, while the dark region is the cause of the incorrect prediction of the model. This also demonstrates that our method can achieve higher interpretability with fewer fine-grained regions. For further experiments and analysis of this method across various network backbones, please refer to Appendix~\ref{backbone}.

\begin{table}[t]
    \vspace{-40 pt}
    \caption{Evaluation of discovering the cause of incorrect predictions.}\vspace{-10 pt}
    \label{discover_mistake}
    \begin{center}
    \resizebox{\textwidth}{!}{
        \begin{tabular}{l|ccccc}
        \toprule
        \multirow{2}{*}{Method} & \multicolumn{4}{c}{Average highest confidence ($\uparrow$)} & \multirow{2}{*}{Insertion ($\uparrow$)} \\
        & (0-25\%)  & (0-50\%) & (0-75\%)  & (0-100\%) \\ \midrule
        Grad-CAM++~\citep{chattopadhay2018grad}  & 0.1988 & 0.2447 & 0.2544 & 0.2647 & 0.1094 \\ \rowcolor{gray!20}
        Grad-CAM++ (w/ ours)   & \textbf{0.2424} & \textbf{0.3575} & \textbf{0.3934} & \textbf{0.4193} & \textbf{0.1672}\\
        Score-CAM~\citep{wang2020score}                  & 0.1896 & 0.2323 & 0.2449 & 0.2510 & 0.1073 \\ \rowcolor{gray!20}
        Score-CAM (w/ ours)        & \textbf{0.2491} & \textbf{0.3395} & \textbf{0.3796} & \textbf{0.4082} & \textbf{0.1622} \\
        HSIC-Attribution~\citep{novello2022making} & 0.1709 & 0.2091 & 0.2250 & 0.2493 & 0.1446 \\ \rowcolor{gray!20}
        HSIC-Attribution (w/ ours) & \textbf{0.2430} & \textbf{0.3519} & \textbf{0.3984} & \textbf{0.4513} & \textbf{0.1772} \\ \midrule
        Patch 10$\times$10  & \textbf{0.2020} & \textbf{0.4065} & \textbf{0.4908} & \textbf{0.5237} & 0.1519 \\ \bottomrule
        \end{tabular}
    }
    \end{center}
\end{table}

\begin{figure}[!t]
    \centering
    \setlength{\abovecaptionskip}{0.cm}
    \includegraphics[width = \textwidth]{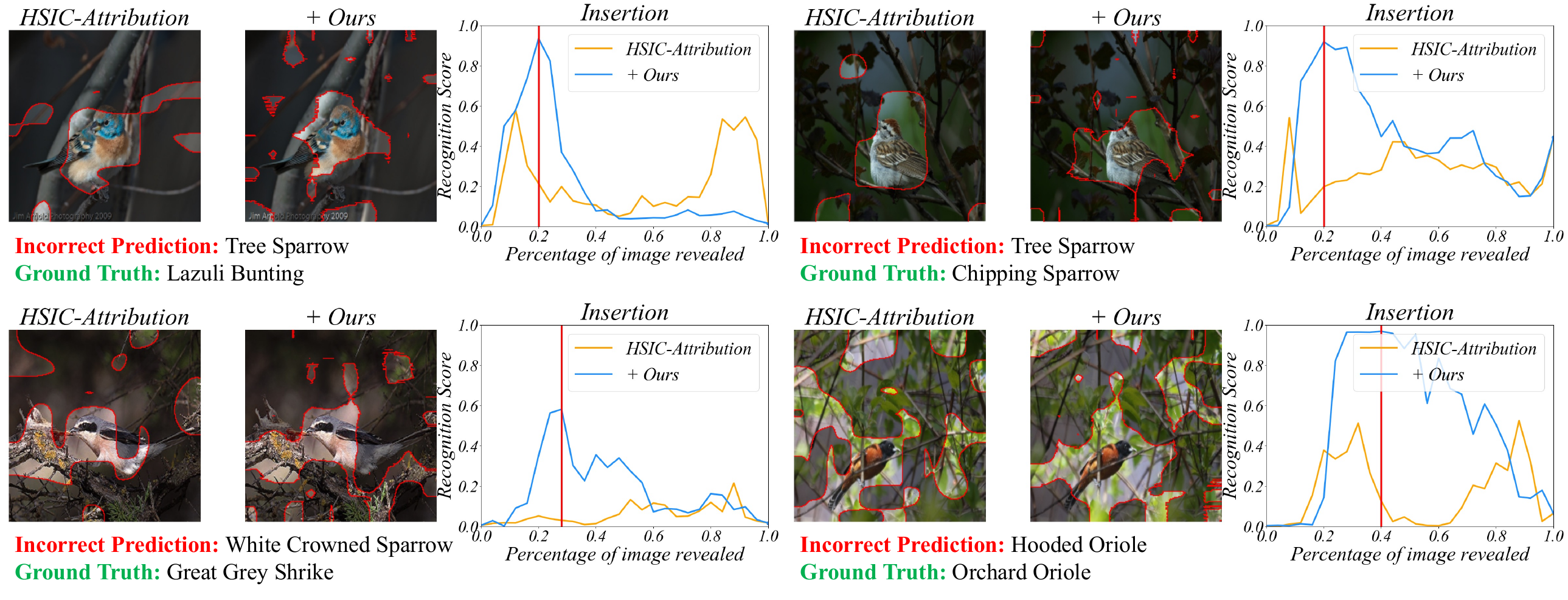}
    \caption{
        Visualization of the method for discovering what causes model prediction errors. The Insertion curve shows the correlation between the searched region and ground truth class prediction confidence. The highlighted region matches the searched region indicated by the red line in the curve, and the dark region is the error cause identified by the method.
    }
    \label{Debug_image}
\end{figure}

\subsection{Ablation Study}

\paragraph{Components of Submodular Function} We analyze the impact of various submodular function-based score functions on the results for both the Celeb-A and CUB-200-2011 datasets, the saliency map generated by the HSIC-Attribution method is used as a prior. As shown in Table~\ref{ablation_score_function}, we observed that using a single score function within the submodular function limits attribution faithfulness. When these score functions are combined in pairs, the faithfulness of our method will be further improved. We demonstrate that removing any of the four score functions leads to deteriorated Deletion and Insertion scores, confirming the validity of these score functions. We find that the effectiveness score is key for the Celeb-A dataset, while the collaboration score is more important for the CUB-200-2011 dataset.

\begin{table}[t]
    \vspace{-40pt}
    \caption{Ablation study on components of different score functions of submodular function on the Celeb-A, and CUB-200-2011 validation sets.}
    \label{ablation_score_function}
    \begin{center}
        \resizebox{0.85\textwidth}{!}{
            \begin{tabular}{cccc|cc|cc}
            \toprule
            \multicolumn{4}{c|}{Submodular   Function}                                        & \multicolumn{2}{c}{Celeb-A} & \multicolumn{2}{c}{CUB-200-2011} \\ \cmidrule(r){1-4} \cmidrule(lr){5-6} \cmidrule(l){7-8}
            Conf. Score & Eff. Score & Cons. Score & Colla. Score & \multirow{2}{*}{Deletion ($\downarrow$)} & \multirow{2}{*}{Insertion ($\uparrow$)} & \multirow{2}{*}{Deletion ($\downarrow$)} & \multirow{2}{*}{Insertion ($\uparrow$)} \\
            (Equation~\ref{confidence_function}) & (Equation~\ref{r_function}) & (Equation~\ref{consistency_function}) & (Equation~\ref{collaboration_function}) &  & &  &  \\ \midrule
            \Checkmark   & \XSolidBrush & \XSolidBrush & \XSolidBrush & 0.3161 & 0.1795 &    0.3850 & 0.3455 \\
            \XSolidBrush & \Checkmark & \XSolidBrush & \XSolidBrush & 0.1211 & 0.5615   & 0.0835 & 0.6383 \\
            \XSolidBrush & \XSolidBrush & \Checkmark & \XSolidBrush & 0.2849 & 0.2291      & 0.1019 & 0.6905 \\
            \XSolidBrush & \XSolidBrush & \XSolidBrush & \Checkmark & 0.1591 & 0.3053 &    0.0771 & 0.5409 \\ \midrule
            \Checkmark & \Checkmark & \XSolidBrush & \XSolidBrush & 0.1075 & 0.5714 &      0.0865 & 0.6624 \\
            \XSolidBrush & \Checkmark & \Checkmark & \XSolidBrush & 0.1082 & 0.5692 &      0.0750 & 0.7111 \\
            \XSolidBrush & \XSolidBrush & \Checkmark & \Checkmark & 0.1558 & 0.3617 & 0.0641   & 0.7181    \\ \midrule
            \XSolidBrush & \Checkmark & \Checkmark & \Checkmark & 0.1074 & 0.5735 & 0.0632 & 0.7169 \\
            \Checkmark & \XSolidBrush & \Checkmark & \Checkmark & 0.1993 & 0.2616 & 0.0623 & 0.7227    \\
            \Checkmark & \Checkmark & \XSolidBrush & \Checkmark & 0.1067 & 0.5712 & 0.0651 &   0.6753 \\
            \Checkmark & \Checkmark & \Checkmark & \XSolidBrush & 0.1088 & 0.5750 & 0.0811 & 0.7090 \\
            \Checkmark & \Checkmark & \Checkmark & \Checkmark &  \textbf{0.1054} & \textbf{0.5752} & \textbf{0.0613} & \textbf{0.7262} \\
            \bottomrule
            \end{tabular}
        }
    \end{center}
    \vspace{-10pt}
\end{table}

\paragraph{Whether to Use a Priori Attribution Map} Our method uses existing attribution algorithms as priors when dividing the image into sub-regions. Table~\ref{impact_on_prior_attribution_map} shows the results, we directly divide the image into $N \times N$ patches without using a priori attribution map. Each element in the set only retains the pixel value of one patch area, and other areas are filled with values 0. We can observe that, whether on the Celeb-A dataset or the CUB-200-2011 dataset, the more patches are divided, the Deletion AUC score increases, and the fewer patches are divided, the Insertion AUC score increases. However, no matter how the number of patches is changed, we find that introducing a prior saliency map generated by the HSIC attribution method works best, in terms of Insertion or Deletion scores. Therefore, we believe that introducing a prior saliency map to assist in dividing image areas is effective and can improve the computational efficiency of the algorithm.

\begin{table}[t]
    \caption{Impact on whether to use a priori attribution map.}
    \label{impact_on_prior_attribution_map}
    \begin{center}
        \resizebox{0.85\textwidth}{!}{
        \begin{tabular}{lc|cc|cc}
        \toprule
        \multirow{2}{*}{Method} & \multirow{2}{*}{Divided set size $m$}   & \multicolumn{2}{c}{Celeb-A} & \multicolumn{2}{c}{CUB-200-2011} \\ \cmidrule(lr){3-4} \cmidrule(l){5-6}
        &  & Deletion ($\downarrow$) & Insertion ($\uparrow$) & Deletion ($\downarrow$) & Insertion ($\uparrow$) \\ \midrule
        Patch 7$\times$7   & 49  & 0.1493       & 0.5642 & 0.1061 & 0.6903 \\
        Patch 10$\times$10 & 100 & 0.1365       & 0.5459 & 0.1024 & 0.6159 \\
        Patch 14$\times$14 & 196 & 0.1284    & 0.5562 & 0.0853 & 0.5805 \\ \midrule
        +HSIC-Attribution  & 25 & \textbf{0.1054} & \textbf{0.5752}    & \textbf{0.0613}    & \textbf{0.7262}\\
        \bottomrule
        \end{tabular}
        }
    \end{center}\vspace{-10pt}
\end{table}


\section{Conclusion}
In this paper, we propose a new method that reformulates the attribution problem as a submodular subset selection problem. To address the lack of attention to local regions, we construct a novel submodular function to discover more accurate fine-grained interpretation regions. Specifically, four different constraints implemented on sub-regions are formulated together to evaluate the significance of various subsets, i.e., confidence, effectiveness, consistency, and collaboration scores.
The proposed method has outperformed state-of-the-art methods on two face datasets (Celeb-A and VGG-Face2) and a fine-grained dataset (CUB-200-2011). Experimental results show that the proposed method can improve the Deletion and Insertion scores for the correctly predicted samples. While for incorrectly predicted samples, our method excels in identifying the reasons behind the model’s decision-making errors.

%
\subsubsection*{Acknowledgments}

This work was supported in part by the National Key R\&D Program of China (Grant No. 2022ZD0118102), in part by the National Natural Science Foundation of China (No. 62372448, 62025604, 62306308), and part by Beijing Natural Science Foundation (No. L212004).

\bibliography{egbib}
\bibliographystyle{iclr2024_conference}

\newpage
\appendix

\section{Notations}\label{notation}
The notations used throughout this article are summarized in Table~\ref{notation}.
\begin{table}[h]
    \caption{Some important notations used in this paper.}
    \label{notation}
    \begin{center}
        \begin{tabular}{lp{9cm}}
        \toprule
        Notation & Description \\ \midrule
        $\mathbf{I}$ & an input image     \\
        $\mathbf{I}^M$    & a sub-region into which $\mathbf{I}$ is divided   \\
        $V$ & a finite set of divided sub-regions \\
        $S$ & a subset of $V$   \\
        $\alpha$ & an element from $V \setminus S$   \\
        $k$ & the size of the $S$ \\
        $\mathcal{F}(\cdot)$ & a function that maps a set to a value \\
        $\mathcal{A}$ & saliency map calculated by attribution algorithms   \\
        $N \times N$ & the number of divided patches of $\mathbf{I}$ \\
        $m$ & the number of sub-regions into which the image $\mathbf{I}$ is divided \\
        $d$ & the number of patches in $\mathbf{I}^M$ \\
        $\mathbf{x}$ & an input sample \\
        $\mathbf{y}$ & the one-hot label \\
        $u$ & the predictive uncertainty ranges from 0 to 1 \\
        $F_{u}(\cdot)$ & a deep evidential network for calculating the $u$ of $\mathbf{x}$ \\
        $F(\cdot)$ & a traditional network encoder \\
        $K$ & the number of classes \\
        $\mathrm{dist}(\cdot, \cdot)$ & a function to calculate the distance between two feature vectors   \\
        $\boldsymbol{f}_{s}$ & the target semantic feature vector  \\
        \bottomrule
        \end{tabular}
    \end{center}
\end{table}

\section{Theory proof}

\subsection{Proof of Lemma~\ref{lemma_submodular}}\label{proof_of_submodular_lemma}
\begin{proof}
Consider two sub-sets $S_{a}$ and $S_{b}$ in set $V$, where $S_{a} \subseteq S_{b} \subseteq V$. Given an element $\alpha$, where $\alpha = V \setminus S_{b}$. The necessary and sufficient conditions for the function $\mathcal{F}(\cdot)$ to satisfy the submodular property are:
\begin{equation}
    \mathcal{F}\left(S_{a} \cup \{ \alpha \}\right) - \mathcal{F}\left(S_{a} \right) \ge \mathcal{F}\left(S_{b} \cup \{ \alpha \}\right) - \mathcal{F}\left(S_{b} \right).
\end{equation}

For Eq.~\ref{confidence_function}, assuming that the individual element $\alpha$ of the collection division is relatively small, according to the Taylor decomposition~\citep{montavon2017explaining}, we can locally approximate $F_{u}(S_a + \alpha)=F_{u}(S_a) + \nabla{F_{u}(S_a)} \cdot \alpha$. Thus:
\begin{equation}
  \begin{aligned}
    s_{\mathrm{conf.}} \left ( S_a + \alpha \right ) - s_{\mathrm{conf.}} \left ( S_a \right ) &= \frac{K}{\exp \left( F_{u}\left( S_a \right) \right) + K} - \frac{K}{\exp \left( F_{u}\left( S_a + \alpha \right) \right) + K} \\
         &= \frac{K}{\exp \left( F_{u}\left( S_a \right) \right) + K} - \frac{K}{\exp \left( F_{u}\left( S_a \right) \right) \exp\left ( \nabla{F_{u}(S_a)} \cdot \alpha \right) + K},
  \end{aligned}
\end{equation}
since $S_{a} \cap \alpha = \varnothing$, $S_a$ and $\alpha$ do not overlap in the image space, and $\alpha$ is small. Therefore, we can regard $\nabla{F_{u}(S_a)} \cdot \alpha \simeq 0$. Follow up:
\begin{equation}
    \begin{aligned}
        s_{\mathrm{conf.}} \left ( S_a + \alpha \right ) - s_{\mathrm{conf.}} \left ( S_a \right ) &\simeq \frac{K}{\exp \left( F_{u}\left( S_a \right) \right) + K} - \frac{K}{\exp \left( F_{u}\left( S_a \right) \right) \exp\left( \mathbf{0} \right) + K} \\
        &= 0_{+},
    \end{aligned}
\end{equation}
and in the same way, $s_{\mathrm{conf.}} \left ( S_b + \alpha \right ) - s_{\mathrm{conf.}} \left ( S_b \right ) \simeq 0_{+}$. We have:
\begin{equation}\label{proof_eq_confidence_score}
    s_{\mathrm{conf.}} \left ( S_a + \alpha \right ) - s_{\mathrm{conf.}} \left ( S_a \right ) - \left( s_{\mathrm{conf.}} \left ( S_b + \alpha \right ) - s_{\mathrm{conf.}} \left ( S_b \right ) \right) \approx 0.
\end{equation}

For Eq.~\ref{r_function}, when a new element $\alpha$ is added to the set $S_{a}$, the minimum distance between elements in $S_{a}$ and other elements may be further reduced, i.e., for any element $s_{i} \in S_{a}$, we have $\min_{s_j \in S_{a} \cup \{\alpha\}, s_j \ne s_i}{\mathrm{dist} \left(
 F\left( s_i \right),   F\left(s_j\right)\right)} \le \min_{s_j \in S_{a}, s_j \ne s_i}{\mathrm{dist} \left(
 F\left( s_i \right),   F\left(s_j\right)\right)}$. Thus:
\begin{equation}
    \begin{aligned}
    s_{\mathrm{eff.}} \left ( S_{a} \cup \{\alpha\} \right ) &= \min_{s_i \in S_{a}}{\mathrm{dist} \left(
 F\left(\alpha\right),   F\left(s_i\right)\right)} + \sum_{s_i \in S_{a}}{\min_{s_j \in S_{a} \cup \{\alpha\}, s_{j}\ne s_{i}}}{\mathrm{dist} \left(
 F\left(s_{i}\right),   F\left(s_{j}\right)\right)} \\
 &= \min_{s_i \in S_{a}}{\mathrm{dist} \left(
 F\left(\alpha\right),   F\left(s_i\right)\right)} + \sum_{s_i \in S_{a}}{\min_{s_j \in S_{a}, s_{j}\ne s_{i}}}{\mathrm{dist} \left(
 F\left(s_{i}\right),   F\left(s_{j}\right)\right)} - \varepsilon_a,
    \end{aligned}
\end{equation}
where $\varepsilon_a$ is a constant, which is the sum of the minimum distance reductions of the elements in the original $S_{a}$ after $\alpha$ is added. Then, we have:
\begin{equation}
    s_{\mathrm{eff.}} \left ( S_{a} \cup \{\alpha\} \right ) - s_{\mathrm{eff.}} \left ( S_{a} \right ) = \min_{s_i \in S_{a}}{\mathrm{dist} \left(
 F\left(\alpha\right),   F\left(s_i\right)\right)} - \varepsilon_a,
\end{equation}
and in the same way,
\begin{equation}
    s_{\mathrm{eff.}} \left ( S_{b} \cup \{\alpha\} \right ) - s_{\mathrm{eff.}} \left ( S_{b} \right ) = \min_{s_i \in S_{b}}{\mathrm{dist} \left(
 F\left(\alpha\right),   F\left(s_i\right)\right)} - \varepsilon_b,
\end{equation}
since $S_{a} \subseteq S_{b}$, the minimum distance between alpha and elements in $S_{b} \setminus S_{a}$ may be smaller than the minimum distance between alpha and elements in $S_{a}$, thus,
\begin{equation*}
    \min_{s_i \in S_{a}}{\mathrm{dist} \left(
 F\left(\alpha\right),   F\left(s_i\right)\right)} \ge \min_{s_i \in S_{b}}{\mathrm{dist} \left(
 F\left(\alpha\right),   F\left(s_i\right)\right)},
\end{equation*}
since there are more elements in $S_{b}$ than in $S_{a}$, more elements in $S_b$ have the shortest distance from $\alpha$, that, $\varepsilon_b \ge \varepsilon_a$. Therefore, we have:
\begin{equation}\label{proof_eq_r_score}
    s_{\mathrm{eff.}} \left ( S_{a} \cup \{\alpha\} \right ) - s_{\mathrm{eff.}} \left ( S_{a} \right ) \ge s_{\mathrm{eff.}} \left ( S_{b} \cup \{\alpha\} \right ) - s_{\mathrm{eff.}} \left ( S_{b} \right ).
\end{equation}

For Eq.~\ref{consistency_function}, let $G \left(S_{a}\right) = F\left(S_{a}\right) \cdot \boldsymbol{f}_{s}$. Assuming that the individual element $\alpha$ of the collection division is relatively small, according to the Taylor decomposition, we can locally approximate $G \left(S_{a} + \alpha\right) = G \left( S_{a} \right) + \nabla{G\left( S_{a} \right)} \cdot \alpha$. Assuming that the searched $\alpha$ is valid, i.e., $\nabla{G\left( S_{a} \right)} > 0$. Thus:
\begin{equation}
    \begin{aligned}
    s_{\mathrm{cons.}} \left ( S_{a} + \alpha, \boldsymbol{f}_{s} \right ) - s_{\mathrm{cons.}} \left ( S_{a}, \boldsymbol{f}_{s} \right ) &= \frac{G(S_{a})  + \nabla{G\left( S_{a} \right)} \cdot \alpha}{\| F(S_{a}) + \nabla{F\left( S_{a} \right)} \cdot \alpha\| \| \boldsymbol{f}_{s} \|} - \frac{G(S_{a})}{\| F(S_{a}) \| \| \boldsymbol{f}_{s} \|} \\
    &\simeq \frac{\nabla{G\left( S_{a} \right)} \cdot \alpha}{\| F(S_{a})\| \| \boldsymbol{f}_{s} \|},
    \end{aligned}
\end{equation}
since $S_{a} \cap \alpha = \varnothing$, $S_a$ and $\alpha$ do not overlap in the image space, and $\alpha$ is small, $\nabla{G\left( S_{a} \right)} \cdot \alpha$ is small. Then, we have:
\begin{equation}\label{proof_eq_consistency_score}
    s_{\mathrm{cons.}} \left ( S_{a} + \alpha, \boldsymbol{f}_{s} \right ) - s_{\mathrm{cons.}} \left ( S_{a}, \boldsymbol{f}_{s} \right ) - \left( s_{\mathrm{cons.}} \left ( S_{b} + \alpha, \boldsymbol{f}_{s} \right ) - s_{\mathrm{cons.}} \left ( S_{b}, \boldsymbol{f}_{s} \right ) \right) \approx 0.
\end{equation}

For Eq.~\ref{collaboration_function}, let $G \left(\mathbf{I} - S_{a}\right) = F\left(\mathbf{I} - S_{a}\right) \cdot \boldsymbol{f}_{s}$. Assuming that the individual element $\alpha$ of the collection division is relatively small, according to the Taylor decomposition, we can locally approximate $G \left(\mathbf{I} - S_{a} - \alpha\right) = G \left(\mathbf{I} - S_{a}\right) - \nabla G\left(\mathbf{I} - S_{a}\right) \cdot \alpha$. Assuming that the searched alpha is valid, i.e., $\nabla{G\left(\mathbf{I} - S_{a} \right)} > 0$. Thus:
\begin{equation}
    \begin{aligned}
    s_{\mathrm{colla.}} \left ( S_{a} + \alpha, \mathbf{I}, \boldsymbol{f}_{s} \right ) - s_{\mathrm{colla.}} \left ( S_{a}, \mathbf{I}, \boldsymbol{f}_{s} \right ) &= \frac{G \left(\mathbf{I} - S_{a}\right)}{\|F\left(\mathbf{I} - S_{a} \right)\|  \|\boldsymbol{f}_{s}\|} - \frac{G \left(\mathbf{I} - S_{a}\right) - \nabla G\left(\mathbf{I} - S_{a}\right) \cdot \alpha}{\|F\left(\mathbf{I} - S_{a} -\alpha \right)\|  \|\boldsymbol{f}_{s}\|} \\
    &\simeq \frac{\nabla G\left(\mathbf{I} - S_{a}\right) \cdot \alpha}{\|F\left(\mathbf{I} - S_{a} \right)\|  \|\boldsymbol{f}_{s}\|},
    \end{aligned}
\end{equation}
since $\alpha$ is small, $\nabla{G\left( \mathbf{I} - S_{a} \right)} \cdot \alpha$ is small. Then, we have:
\begin{equation}\label{proof_eq_collaboration_score}
    s_{\mathrm{colla.}} \left ( S_{a} + \alpha, \mathbf{I}, \boldsymbol{f}_{s} \right ) - s_{\mathrm{colla.}} \left ( S_{a}, \mathbf{I}, \boldsymbol{f}_{s} \right ) - \left( s_{\mathrm{colla.}} \left ( S_{b} + \alpha, \mathbf{I}, \boldsymbol{f}_{s} \right ) - s_{\mathrm{colla.}} \left ( S_{b}, \mathbf{I}, \boldsymbol{f}_{s} \right ) \right) \approx 0.
\end{equation}

Combining Eq.~\ref{proof_eq_confidence_score}, \ref{proof_eq_r_score}, \ref{proof_eq_consistency_score}, and \ref{proof_eq_collaboration_score} we can get:
\begin{equation}
    \mathcal{F}\left(S_{a} \cup \{ \alpha \}\right) - \mathcal{F}\left(S_{a} \right) \ge \mathcal{F}\left(S_{b} \cup \{ \alpha \}\right) - \mathcal{F}\left(S_{b} \right),
\end{equation}
hence, we can prove that Eq.~\ref{submodular_function} is a submodular function.
\end{proof}

\subsection{Proof of Lemma~\ref{lemma_monotonically}}\label{proof_of_monotonically_lemma}

\begin{proof}
Consider a subset $S$, given an element $\alpha$, assuming that $\alpha$ is contributing to interpretation. The necessary and sufficient conditions for the function $\mathcal{F}(\cdot)$ to satisfy the property of monotonically non-decreasing is:
\begin{equation}
    \mathcal{F}\left(S \cup \{ \alpha \}\right) - \mathcal{F}\left(S \right) > 0,
\end{equation}
where, for Eq.~\ref{confidence_function}:
\begin{equation}
    s_{\mathrm{conf.}} \left ( S + \alpha \right ) - s_{\mathrm{conf.}} \left ( S \right ) = \frac{K}{\exp \left( F_{u}\left( S \right) \right) + K} - \frac{K}{\exp \left( F_{u}\left( S \right) \right) \exp\left ( \nabla{F_{u}(S)} \cdot \alpha \right) + K},
\end{equation}
since $\alpha$ is contributing to interpretation, $\nabla{F_{u}(S)} > 0$, and $\exp\left ( \nabla{F_{u}(S)} \cdot \alpha \right) > 1$, thus:
\begin{equation}\label{monotonically_confidence}
    s_{\mathrm{conf.}} \left ( S + \alpha \right ) - s_{\mathrm{conf.}} \left ( S \right ) > 0.
\end{equation}

For Eq.~\ref{r_function},
\begin{equation*}
    s_{\mathrm{eff.}} \left ( S \cup \{\alpha\} \right ) - s_{\mathrm{eff.}} \left ( S \right ) = \min_{s_i \in S}{\mathrm{dist} \left(
 F\left(\alpha\right),   F\left(s_i\right)\right)} - \varepsilon,
\end{equation*}
since effective element $\alpha$ are selected as much as possible, the value $\varepsilon$ will be small,
\begin{equation}\label{monotonically_r}
    s_{\mathrm{eff.}} \left ( S \cup \{\alpha\} \right ) - s_{\mathrm{eff.}} \left ( S \right ) \simeq \min_{s_i \in S}{\mathrm{dist} \left(
 F\left(\alpha\right),   F\left(s_i\right)\right)} > 0.
\end{equation}

For Eq.~\ref{consistency_function}, assuming that the searched $\alpha$ is valid,
\begin{equation}\label{monotonically_consistency}
    s_{\mathrm{cons.}} \left ( S + \alpha, \boldsymbol{f}_{s} \right ) - s_{\mathrm{cons.}} \left ( S, \boldsymbol{f}_{s} \right )
    \simeq \frac{\nabla{G\left( S \right)} \cdot \alpha}{\| F(S)\| \| \boldsymbol{f}_{s} \|} > 0,
\end{equation}
likewise, for Eq.~\ref{collaboration_function},
\begin{equation}\label{monotonically_collaboration}
    s_{\mathrm{colla.}} \left ( S + \alpha, \mathbf{I}, \boldsymbol{f}_{s} \right ) - s_{\mathrm{colla.}} \left ( S, \mathbf{I}, \boldsymbol{f}_{s} \right ) \simeq \frac{\nabla G\left(\mathbf{I} - S\right) \cdot \alpha}{\|F\left(\mathbf{I} - S \right)\|  \|\boldsymbol{f}_{s}\|} > 0.
\end{equation}

Combining Eq.~\ref{monotonically_confidence}, \ref{monotonically_r}, \ref{monotonically_consistency}, and \ref{monotonically_collaboration} we can get:
\begin{equation}
    \mathcal{F}\left(S \cup \{ \alpha \}\right) - \mathcal{F}\left(S \right) > 0,
\end{equation}
hence, we can prove that Eq.~\ref{submodular_function} is monotonically non-decreasing.

\end{proof}

\section{Implementation Details} \label{implementation}
We primarily employed CNN-based architectures to explain our models. For the face datasets, we evaluated recognition models that were trained using the ResNet-101~\citep{he2016deep} architecture and the ArcFace~\citep{deng2019arcface} loss function, with an input size of $112 \times 112$ pixels. For the CUB-200-2011 dataset, we evaluated a recognition model trained on the ResNet-101 architecture with a cross-entropy loss function and an input size of $224 \times 224$ pixels. It's worth noting that all the uncertainty models denoted as $F_{u}(\cdot)$ in Sec.~\ref{submodular_function_design} were trained using the ResNet101 architecture.
Given that the face recognition task primarily involves face verification, we use the target semantic feature $\boldsymbol{f}_{s}$ in functions $s_{\mathrm{cons.}} \left ( S, \boldsymbol{f}_{s} \right )$ and $s_{\mathrm{colla.}} \left ( S, \mathbf{I}, \boldsymbol{f}_{s} \right )$ to adopt features computed from the original image using a pre-trained feature extractor. For the CUB-200-2011 dataset, we directly implement the fully connected layer of the classifier for a specified class.
For the two face datasets, we set $N=28$ and $m=98$. For the CUB-200-2011 dataset, we set $N=10$ and $m=25$.
We conduct our experiments using the Xplique~\footnote{Xplique:~\url{https://github.com/deel-ai/xplique}} repository, which offers baseline methods and evaluation tools. To evaluate the Deletion and Insertion AUC scores, we set $k$ to be the same as the set size $m$ to evaluate the importance ranking of different sub-regions. These experiments were performed on an NVIDIA 3090 GPU.

\section{Different Network Backbone}\label{backbone}
We verified the effect of our method on more CNN network architectures, including VGGNet-19~\citep{simonyan2015very}, MobileNetV2~\citep{sandler2018mobilenetv2}, and EfficientNetV2-M~\citep{tan2021efficientnetv2}. We adopted the same settings as the original network, with an input size of $224 \times 224$ for VGGNet-19, $224 \times 224$ for MobileNetV2, and $384 \times 384$ for EfficientNetV2-M. We conduct the experiment under the same setting as those employed for ResNet-101. When employing VGGNet-19, our method outperforms the SOTA method HSIC-Attribution, achieving a 70.2\% increase in the average highest confidence and a 5.2\% improvement in the Insertion AUC score. While utilizing MobileNetV2, our method outperforms the SOTA method HSIC-Attribution, achieving an 84.2\% increase in the average highest confidence and a 17.6\% improvement in the Insertion AUC score. Similarly, with EfficientNetV2-M,  our method surpasses HSIC-Attribution, achieving a 23.4\% increase in the average highest confidence and an 8.5\% improvement in the Insertion AUC score. This indicates the versatility and effectiveness of our method across various backbones. Note that the 400 incorrectly predicted samples used by these networks in the experiment are not exactly the same, given the differences across the networks.

\begin{table}[t]
    \vspace{-40pt}
    \caption{Evaluation on discovering the cause of incorrect predictions for different network backbones.}
    \label{discover_mistake_backbone}
    \begin{center}
    \resizebox{\textwidth}{!}{
        \begin{tabular}{cl|ccccc}
        \toprule
        & \multicolumn{1}{c|}{} & \multicolumn{4}{c}{Average highest confidence ($\uparrow$)} &                             \\
        \multirow{-2}{*}{Backbone} & \multicolumn{1}{c|}{\multirow{-2}{*}{Method}}       & (0-25\%)                  & (0-50\%)                  & (0-75\%)                  & (0-100\%)                 & \multirow{-2}{*}{Insertion ($\uparrow$)} \\ \midrule
        & Grad-CAM++~\citep{chattopadhay2018grad} & 0.1323 & 0.2130 & 0.2427 & 0.2925 & 0.1211 \\
        & \cellcolor[HTML]{EFEFEF}Grad-CAM++ (w/ ours)       & \cellcolor[HTML]{EFEFEF}\textbf{0.1595} & \cellcolor[HTML]{EFEFEF}\textbf{0.2615} & \cellcolor[HTML]{EFEFEF}\textbf{0.3521} & \cellcolor[HTML]{EFEFEF}\textbf{0.4263} & \cellcolor[HTML]{EFEFEF}\textbf{0.1304}    \\
        & Score-CAM~\citep{wang2020score} & 0.1349 & 0.2125 & 0.2583 & 0.3058 & 0.1057 \\
        & \cellcolor[HTML]{EFEFEF}Score-CAM (w/ ours)  & \cellcolor[HTML]{EFEFEF}\textbf{0.1649} & \cellcolor[HTML]{EFEFEF}\textbf{0.2624} & \cellcolor[HTML]{EFEFEF}\textbf{0.3452} & \cellcolor[HTML]{EFEFEF}\textbf{0.4224} & \cellcolor[HTML]{EFEFEF}\textbf{0.1186}    \\
        & HSIC-Attribution~\citep{novello2022making} & 0.1456 & 0.1743 & 0.1906 & 0.2483 & 0.1297 \\
        \multirow{-6}{*}{\begin{tabular}[c]{@{}c@{}}VGGNet-19\\ \citep{simonyan2015very}\end{tabular}} & \cellcolor[HTML]{EFEFEF}HSIC-Attribution (w/ ours) & \cellcolor[HTML]{EFEFEF}\textbf{0.1745} & \cellcolor[HTML]{EFEFEF}\textbf{0.2716} & \cellcolor[HTML]{EFEFEF}\textbf{0.3477} & \cellcolor[HTML]{EFEFEF}\textbf{0.4226} & \cellcolor[HTML]{EFEFEF}\textbf{0.1365}    \\ \midrule
        & Grad-CAM++~\citep{chattopadhay2018grad}  & 0.1988 & 0.2447 & 0.2544 & 0.2647 & 0.1094 \\
        & \cellcolor[HTML]{EFEFEF}Grad-CAM++ (w/ ours)       & \cellcolor[HTML]{EFEFEF}\textbf{0.2424} & \cellcolor[HTML]{EFEFEF}\textbf{0.3575} & \cellcolor[HTML]{EFEFEF}\textbf{0.3934} & \cellcolor[HTML]{EFEFEF}\textbf{0.4193} & \cellcolor[HTML]{EFEFEF}\textbf{0.1672}    \\
        & Score-CAM~\citep{wang2020score} & 0.1896 & 0.2323 & 0.2449 & 0.2510 & 0.1073 \\
        & \cellcolor[HTML]{EFEFEF}Score-CAM (w/ ours)  & \cellcolor[HTML]{EFEFEF}\textbf{0.2491} & \cellcolor[HTML]{EFEFEF}\textbf{0.3395} & \cellcolor[HTML]{EFEFEF}\textbf{0.3796} & \cellcolor[HTML]{EFEFEF}\textbf{0.4082} & \cellcolor[HTML]{EFEFEF}\textbf{0.1622}    \\
        & HSIC-Attribution~\citep{novello2022making} & 0.1709 & 0.2091 & 0.2250 & 0.2493 & 0.1446 \\
        \multirow{-6}{*}{\begin{tabular}[c]{@{}c@{}}ResNet-101\\ \citep{he2016deep}\end{tabular}} & \cellcolor[HTML]{EFEFEF}HSIC-Attribution (w/ ours) & \cellcolor[HTML]{EFEFEF}\textbf{0.2430} & \cellcolor[HTML]{EFEFEF}\textbf{0.3519} & \cellcolor[HTML]{EFEFEF}\textbf{0.3984} & \cellcolor[HTML]{EFEFEF}\textbf{0.4513} & \cellcolor[HTML]{EFEFEF}\textbf{0.1772}    \\ \midrule
        & Grad-CAM++~\citep{chattopadhay2018grad}  & 0.1584 & 0.2820 & 0.3223 & 0.3462 & 0.1284 \\
        & \cellcolor[HTML]{EFEFEF}Grad-CAM++ (w/ ours)       & \cellcolor[HTML]{EFEFEF}\textbf{0.1680} & \cellcolor[HTML]{EFEFEF}\textbf{0.3565} & \cellcolor[HTML]{EFEFEF}\textbf{0.4615} & \cellcolor[HTML]{EFEFEF}\textbf{0.5076} & \cellcolor[HTML]{EFEFEF}\textbf{0.1759}    \\
        & Score-CAM~\citep{wang2020score} & 0.1574 & 0.2456 & 0.2948 & 0.3141 & 0.1195 \\
        & \cellcolor[HTML]{EFEFEF}Score-CAM (w/ ours)        & \cellcolor[HTML]{EFEFEF}\textbf{0.1631} & \cellcolor[HTML]{EFEFEF}\textbf{0.3403} & \cellcolor[HTML]{EFEFEF}\textbf{0.4283} & \cellcolor[HTML]{EFEFEF}\textbf{0.4893} & \cellcolor[HTML]{EFEFEF}\textbf{0.1667}    \\
        & HSIC-Attribution~\citep{novello2022making} & 0.1648 & 0.2190 & 0.2415 & 0.2914 & 0.1635 \\
        \multirow{-6}{*}{\begin{tabular}[c]{@{}c@{}}MobileNetV2\\ \citep{sandler2018mobilenetv2}\end{tabular}} & \cellcolor[HTML]{EFEFEF}HSIC-Attribution (w/ ours) & \cellcolor[HTML]{EFEFEF}\textbf{0.2460} & \cellcolor[HTML]{EFEFEF}\textbf{0.4142} & \cellcolor[HTML]{EFEFEF}\textbf{0.4913} & \cellcolor[HTML]{EFEFEF}\textbf{0.5367} & \cellcolor[HTML]{EFEFEF}\textbf{0.1922}    \\ \midrule
        & Grad-CAM++~\citep{chattopadhay2018grad} & 0.2338 & 0.2549 & 0.2598 & 0.2659 & 0.1605 \\
        & \cellcolor[HTML]{EFEFEF}Grad-CAM++ (w/ ours)       & \cellcolor[HTML]{EFEFEF}\textbf{0.2502} & \cellcolor[HTML]{EFEFEF}\textbf{0.3038} & \cellcolor[HTML]{EFEFEF}\textbf{0.3146} & \cellcolor[HTML]{EFEFEF}\textbf{0.3214} & \cellcolor[HTML]{EFEFEF}\textbf{0.1795}    \\
        & Score-CAM~\citep{wang2020score} & 0.2126 & 0.2327 & 0.2375 & 0.2403 & 0.1572 \\
        & \cellcolor[HTML]{EFEFEF}Score-CAM (w/ ours)        & \cellcolor[HTML]{EFEFEF}\textbf{0.2442} & \cellcolor[HTML]{EFEFEF}\textbf{0.2900} & \cellcolor[HTML]{EFEFEF}\textbf{0.3029} & \cellcolor[HTML]{EFEFEF}\textbf{0.3115} & \cellcolor[HTML]{EFEFEF}\textbf{0.1745}    \\
        & HSIC-Attribution~\citep{novello2022making} & 0.2418 & 0.2561 & 0.2615 & 0.2679 & 0.1611 \\
        \multirow{-6}{*}{\begin{tabular}[c]{@{}c@{}}EfficientNetV2-M\\ \citep{tan2021efficientnetv2} \end{tabular}}                                        & \cellcolor[HTML]{EFEFEF}HSIC-Attribution (w/ ours) & \cellcolor[HTML]{EFEFEF}\textbf{0.2616} & \cellcolor[HTML]{EFEFEF}\textbf{0.3117} & \cellcolor[HTML]{EFEFEF}\textbf{0.3235} & \cellcolor[HTML]{EFEFEF}\textbf{0.3306} & \cellcolor[HTML]{EFEFEF}\textbf{0.1748}    \\ \bottomrule
        \end{tabular}
    }
    \end{center}
\end{table}

\section{Hyperparameter Sensitivity Analysis}

We explored the effects of different score function weighting on the CUB-200-2011 dataset. As shown in Table~\ref{hyper_sensitive}, we find that only when increasing the weight of Eq.~\ref{consistency_function}, the Insertion AUC score will be slightly improved, and other indicators have no obvious effect. If the weight is set too high, it may cause other score functions to fail, resulting in a significant decline in indicator performance. Therefore, we do not need to pay too much attention to how to weigh these score functions. The default weight is set to 1.

\begin{table}[t]
    \vspace{-30pt}
    \caption{Ablation study on the effects of different score functions on the CUB-200-2011 validation set.}
    \label{hyper_sensitive}
    \begin{center}
        \resizebox{0.85\textwidth}{!}{
            \begin{tabular}{cccc|cc}
            \toprule
            Conf. Score & Eff. Score & Cons. Score & Colla. Score & \multirow{2}{*}{Deletion ($\downarrow$)} & \multirow{2}{*}{Insertion ($\uparrow$)} \\
            (Equation~\ref{confidence_function}) & (Equation~\ref{r_function}) & (Equation~\ref{consistency_function}) & (Equation~\ref{collaboration_function}) &  &  \\ \midrule
            1 & 1 & 1 & 1 & \textbf{0.0613} & 0.7262 \\ \midrule
            20 & 1 & 1 & 1 & 0.1333 & 0.5979 \\
            50 & 1 & 1 & 1 & 0.2529 & 0.4681 \\
            100 & 1 & 1 & 1 & 0.3198 & 0.4059 \\
            1 & 20 & 1 & 1 & 0.0753 & 0.6725 \\
            1 & 50 & 1 & 1 & 0.0781 & 0.6614 \\
            1 & 100 & 1 & 1 & 0.0793 & 0.6537 \\
            1 & 1 & 20 & 1 & 0.0645 & \textbf{0.7276} \\
            1 & 1 & 50 & 1 & 0.0664 & \textbf{0.7276} \\
            1 & 1 & 100 & 1 & 0.0689 & 0.7221 \\
            1 & 1 & 1 & 20 & 0.0597 & 0.7174 \\
            1 & 1 & 1 & 50 & 0.0615 & 0.7001 \\
            1 & 1 & 1 & 100 & 0.0657 & 0.7010 \\
            \bottomrule
            \end{tabular}
        }
    \end{center}
\end{table}

\section{Ablation on More Details}\label{more_ablation}

\paragraph{Ablation on division sub-region number $N \times N$ and set size $m$} In this section, we explore the impact of the size of the divided sub-region number $N \times N$ and the size of the set $m$ on attribution performance. The experimental settings are set by default unless otherwise specified. We conducted experiments on the Celeb-A dataset and the CUB-200-2011 dataset. For the Celeb-A dataset, as shown in Table~\ref{ablation_on_size_celeba}, we found that the more granular the division on the face dataset, the better the attribution results can be achieved, however, the attribution effect at the pixel level will be relatively poor. For the face dataset, a better choice is set $N=28$ and $m=98$. On the CUB-200-2011 dataset, as shown in Table~\ref{ablation_on_size_cub}, the attribution effect at the pixel level is still relatively poor. Since the positions of birds in the CUB-200-2011 dataset vary greatly and the background is complex, the images are not suitable for too fine-grained division. For the CUB-200-2011 dataset, a better choice is to set $N=10$ and $m=4$.

\begin{table}[]
    \caption{Ablation study on division sub-region number $N \times N$ and set size $m$ on the Celeb-A validation set. The input image size is $112 \times 112$, the base attribution method is HSIC-Attribution~\citep{novello2022making}.}
    \label{ablation_on_size_celeba}
    \begin{center}
        \resizebox{0.85\textwidth}{!}{
        \begin{tabular}{ccc|cc}
        \toprule
        Division method & \begin{tabular}[c]{@{}c@{}}Number of divided \\ areas for a element\end{tabular} & Collection size & Deletion ($\downarrow$) & Insertion ($\uparrow$) \\ \midrule
        Patch 28$\times$28 & 8     & 98   & \textbf{0.1054} & 0.5752 \\
        Patch 14$\times$14 & 2     & 98   & 0.1158 & 0.5703 \\
        Patch 14$\times$14 & 4     & 49   & 0.1133 & \textbf{0.5757} \\
        Patch 10$\times$10 & 2     & 50   & 0.1288 & 0.5621 \\
        Patch 10$\times$10 & 4     & 25   & 0.1235 & 0.5645 \\
        Patch 8$\times$8   & 2     & 32   & 0.1276 & 0.5542 \\ \midrule
        Pixel 112$\times$112 & 128   & 98   & 0.1258 & 0.5404 \\
        Pixel 112$\times$112 & 256   & 49   & 0.1148 & 0.5615 \\
        Pixel 112$\times$112 & 392   & 32   & 0.1136 & 0.5664 \\
        \bottomrule
        \end{tabular}
        }
    \end{center}
\end{table}

\begin{table}[]
    \caption{Ablation study on division sub-region number $N \times N$ and set size $m$ on the CUB-200-2011 validation set. The input image size is $224 \times 224$, and the base attribution method is HSIC-Attribution~\citep{novello2022making}.}
    \label{ablation_on_size_cub}
    \begin{center}
        \resizebox{0.85\textwidth}{!}{
        \begin{tabular}{ccc|cc}
        \toprule
        Division method & \begin{tabular}[c]{@{}c@{}}Number of divided \\ areas for a element\end{tabular} & Collection size & Deletion ($\downarrow$) & Insertion ($\uparrow$) \\ \midrule
        Patch 14$\times$14 & 2     & 98   & 0.0769 & 0.6343 \\
        Patch 14$\times$14 & 4     & 49   & 0.0652 & 0.6661 \\
        Patch 10$\times$10 & 2     & 50   & 0.0689 & 0.6917 \\
        Patch 10$\times$10 & 4     & 25   & \textbf{0.0613} & \textbf{0.7262} \\
        Patch 8$\times$8   & 2     & 32   & 0.0686 & 0.7079 \\
        \midrule
        Pixel 224$\times$224 & 256   & 49   & 0.0859 & 0.5818 \\
        Pixel 224$\times$224 & 392   & 32   & 0.0768 & 0.6398 \\
        \bottomrule
        \end{tabular}
        }
    \end{center}
\end{table}

\paragraph{Ablation of score function on the attribution of incorrectly predicted samples}

Since the attribution of mispredicted samples requires specifying the correct category, we mainly discuss the impact of Eq.~\ref{consistency_function} and Eq.~\ref{collaboration_function} on the attribution of mispredicted samples because these equations involve specifying categories. The network backbone we studied is ResNet-101, and 400 samples in the CUB-200-2011 test set that were misclassified by this network were used as research objects. As shown in Table~\ref{discover_mistake_ablation}, when any score function is removed, regardless of the imputation algorithm it is based on, the average highest confidence and Insertion AUC score will decrease, with Eq.~\ref{consistency_function} having the greatest impact.

\begin{table}[]
    \vspace{-30 pt}
    \caption{Ablation study on submodular function score components for incorrectly predicted samples in the CUB-200-2011 dataset.}\vspace{-10 pt}
    \label{discover_mistake_ablation}
    \begin{center}
    \resizebox{\textwidth}{!}{
        \begin{tabular}{lcc|ccccc}
        \toprule
        \multirow{2}{*}{Method} & Cons. Score & Colla. Score & \multicolumn{4}{c}{Average highest confidence ($\uparrow$)} & \multirow{2}{*}{Insertion ($\uparrow$)} \\
        & (Equation~\ref{consistency_function}) & (Equation~\ref{collaboration_function}) & (0-25\%)  & (0-50\%) & (0-75\%)  & (0-100\%) \\ \midrule
        Grad-CAM++ (w/ ours)   & \XSolidBrush & \Checkmark  & 0.0821 & 0.1547 & 0.1923 & 0.2303 & 0.1122 \\
        Grad-CAM++ (w/ ours)   & \Checkmark & \XSolidBrush  & 0.1654 & 0.2888 & 0.3338 & 0.3611 & 0.1452 \\
        Grad-CAM++ (w/ ours)   & \Checkmark & \Checkmark    & \textbf{0.2424} & \textbf{0.3575} & \textbf{0.3934} & \textbf{0.4193} & \textbf{0.1672}\\ \midrule
        Score-CAM (w/ ours)    & \XSolidBrush & \Checkmark  & 0.0742 & 0.1348 & 0.1835 & 0.2237 & 0.1072 \\
        Score-CAM (w/ ours)    & \Checkmark & \XSolidBrush  & 0.1383 & 0.2547 & 0.3131 & 0.3402 & 0.1306 \\
        Score-CAM (w/ ours)    & \Checkmark & \Checkmark    & \textbf{0.2491} & \textbf{0.3395} & \textbf{0.3796} & \textbf{0.4082} & \textbf{0.1622} \\ \midrule
        HSIC-Attribution (w/ ours)    & \XSolidBrush & \Checkmark  & 0.1054 & 0.1803 & 0.2177 & 0.2600 & 0.1288 \\
        HSIC-Attribution (w/ ours)    & \Checkmark & \XSolidBrush  & 0.2394 & 0.3479 & 0.3940 & 0.4220 & 0.1645 \\
        HSIC-Attribution (w/ ours)    & \Checkmark & \Checkmark  & \textbf{0.2430} & \textbf{0.3519} & \textbf{0.3984} & \textbf{0.4513} & \textbf{0.1772} \\ \bottomrule
        \end{tabular}
    }
    \end{center}
\end{table}

\paragraph{Ablation of division sub-region number $N \times N$ of incorrect sample attribution} Without using a priori saliency map, we study the impact of different patch division numbers on the error sample attribution effect, where the number of patches $m = N \times N$. We explore the impact of six different division sizes on the results. As depicted in Table~\ref{discover_mistake_patch}, our results indicate that dividing the image into more patches yields higher average highest confidence scores (0-100\%). However, excessive division can introduce instability in the early search area (0-25\%). In summary, without incorporating a priori saliency maps for division, opting for a 10x10 patch division followed by subset selection appears to be the optimal choice.

\begin{table}[t]
    \vspace{-10 pt}
    \caption{Ablation study on the effect of sub-region division size $N \times N$ in incorrect sample attribution.}
    \label{discover_mistake_patch}
    \begin{center}
    \resizebox{0.9 \textwidth}{!}{
        \begin{tabular}{l|ccccc}
        \toprule
        \multirow{2}{*}{Method} & \multicolumn{4}{c}{Average highest confidence ($\uparrow$)} & \multirow{2}{*}{Insertion ($\uparrow$)} \\
        & (0-25\%)  & (0-50\%) & (0-75\%)  & (0-100\%) \\ \midrule
        HSIC-Attribution~\citep{novello2022making} & 0.1709 & 0.2091 & 0.2250 & 0.2493 & 0.1446 \\ \midrule
        Patch 5$\times$5  & \textbf{0.2597} & 0.3933 & 0.4389 & 0.4514 & \textbf{0.1708} \\
        Patch 6$\times$6  & 0.2372 & 0.4025 & 0.4555 & 0.4720 & 0.1538 \\
        Patch 7$\times$7  & 0.2430 & \textbf{0.4289} & 0.4819 & 0.4985 & 0.1621   \\
        Patch 8$\times$8  & 0.1903 & 0.4005 & 0.4740 & 0.5043 & 0.1584 \\
        Patch 10$\times$10  & 0.2020 & 0.4065 & 0.4908 & 0.5237 & 0.1519 \\
        Patch 12$\times$12  & 0.1667 & 0.3816 & \textbf{0.4987} & \textbf{0.5468} & 0.1247 \\ \bottomrule
        \end{tabular}
    }
    \end{center}
    \vspace{-15 pt}
\end{table}

\section{Effectiveness}

\begin{wrapfigure}[13]{r}{0.5\textwidth}
  \begin{center}
    \includegraphics[width=0.4\textwidth]{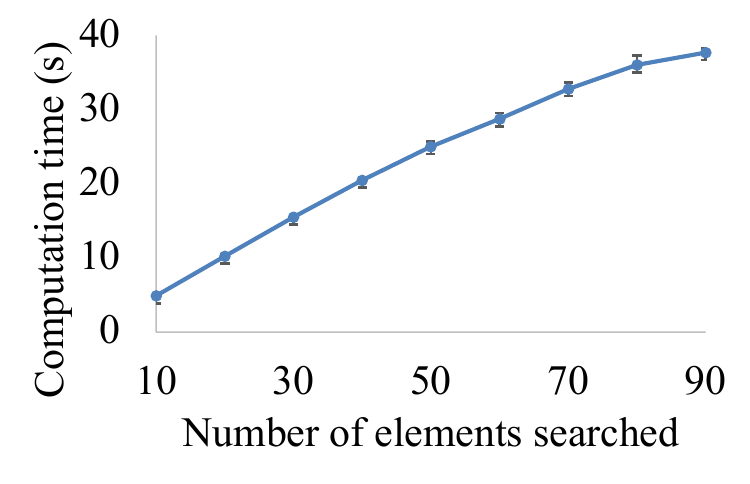}
    \caption{Processing time of our method.}\label{test_time}
  \end{center}
\end{wrapfigure}
In this section, we discuss the processing time effectiveness of the proposed algorithm. The size $m$ of the divided set we are testing is 98, we sequentially test the processing time required by our algorithm under different settings of the number of search elements $k$.

As shown in Fig.~\ref{test_time}, the more elements are searched, the longer times it takes, and the relationship is almost linear. To use the proposed method efficiently, it is better to control the divided set size $m$, the size of the search set can be controlled by controlling the number of divided patches $N$, or by controlling the parameter $d$ that controls the number of patches assigned to each element.

\newpage











\vspace{-80pt}
\section{Visualizing Attribution of Correctly Predicted Classes}\label{visualization_faithfulness}

\subsection{Celeb-A}

The visualization results compared with the HSIC-Attribution method are shown in Fig.~\ref{celeba-vis}. Our method has a higher Insertion AUC curve area and searches for the highest confidence with higher attribution results than the HSIC-Attribution method.

\begin{figure}[h]
    \centering
    \setlength{\abovecaptionskip}{0.cm}
    \includegraphics[width = 0.76\textwidth]{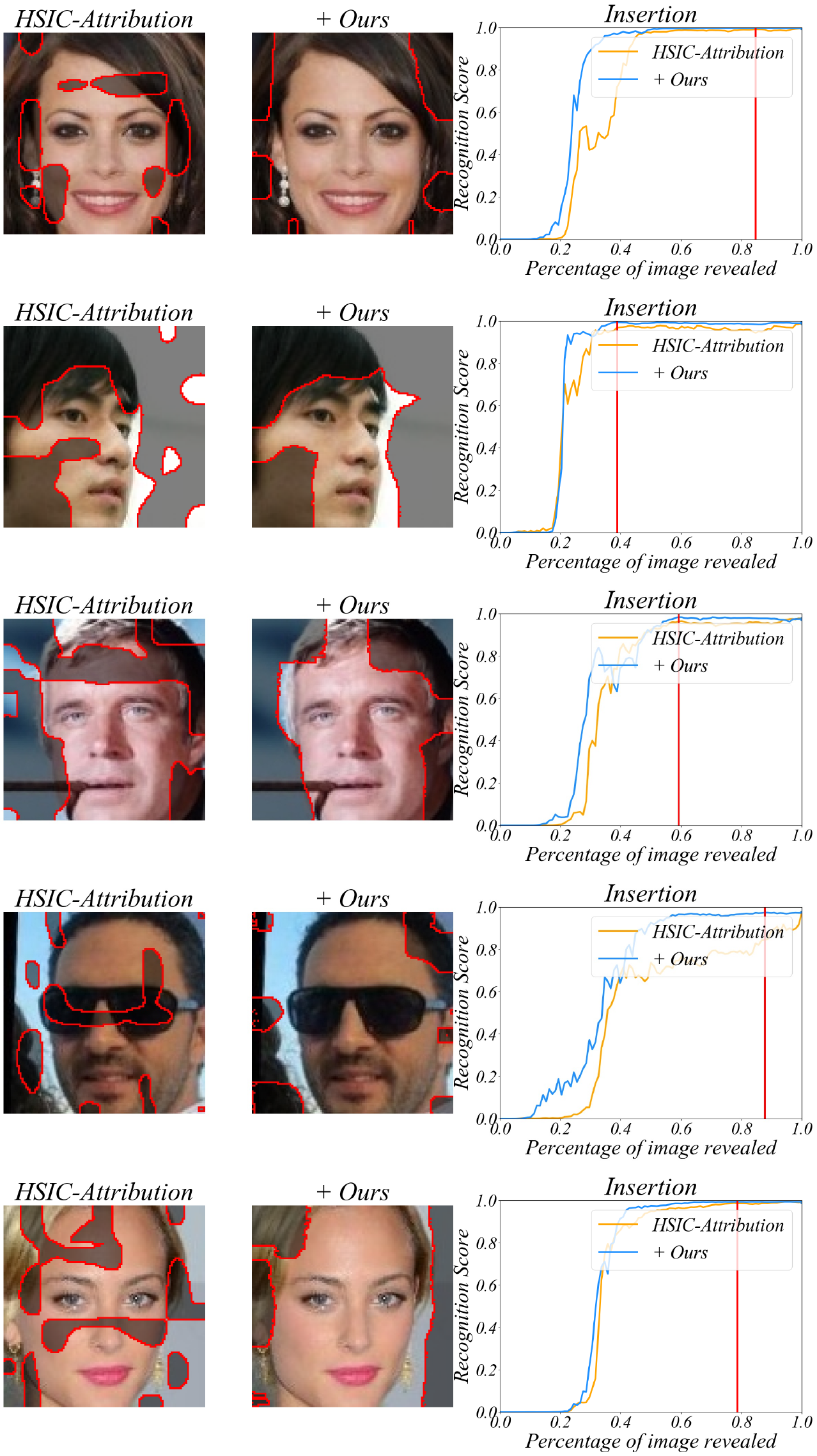}
    \caption{
        Visualization of the methods on the Celeb-A dataset.
    }
    \label{celeba-vis}
\end{figure}

\subsection{CUB-200-2011}

The visualization results compared with the HSIC-Attribution method are shown in Fig.~\ref{cub-vis}. Our method has a higher Insertion AUC curve area and searches for the highest confidence with higher attribution results than the HSIC-Attribution method.

\begin{figure}[h]
    \centering
    \setlength{\abovecaptionskip}{0.cm}
    \includegraphics[width = 0.8\textwidth]{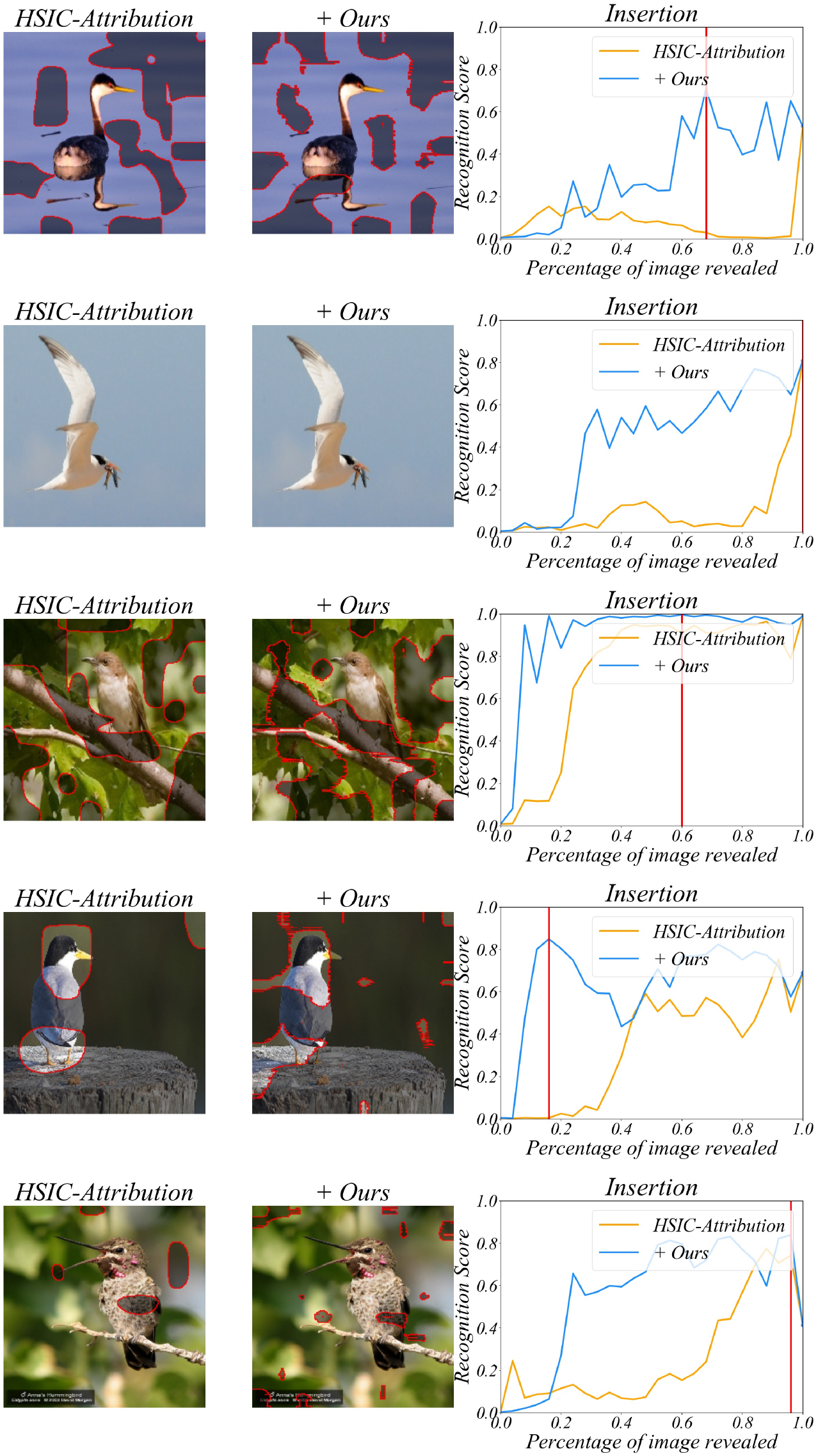}
    \caption{
        Visualization of the methods on the CUB-200-2011 dataset.
    }
    \label{cub-vis}
\end{figure}

\section{Visualizing Attribution of Misclassified Prediction}

Visualization of our method searches for causes of model prediction errors compared with the HSIC-Attribution method, as shown in Fig.~\ref{cub-fair-vis}. Our method has a higher Insertion AUC curve area and searches for the highest confidence with higher attribution results than the HSIC-Attribution method.

\begin{figure}[h]
    \centering
    \setlength{\abovecaptionskip}{0.cm}
    \includegraphics[width = 0.8\textwidth]{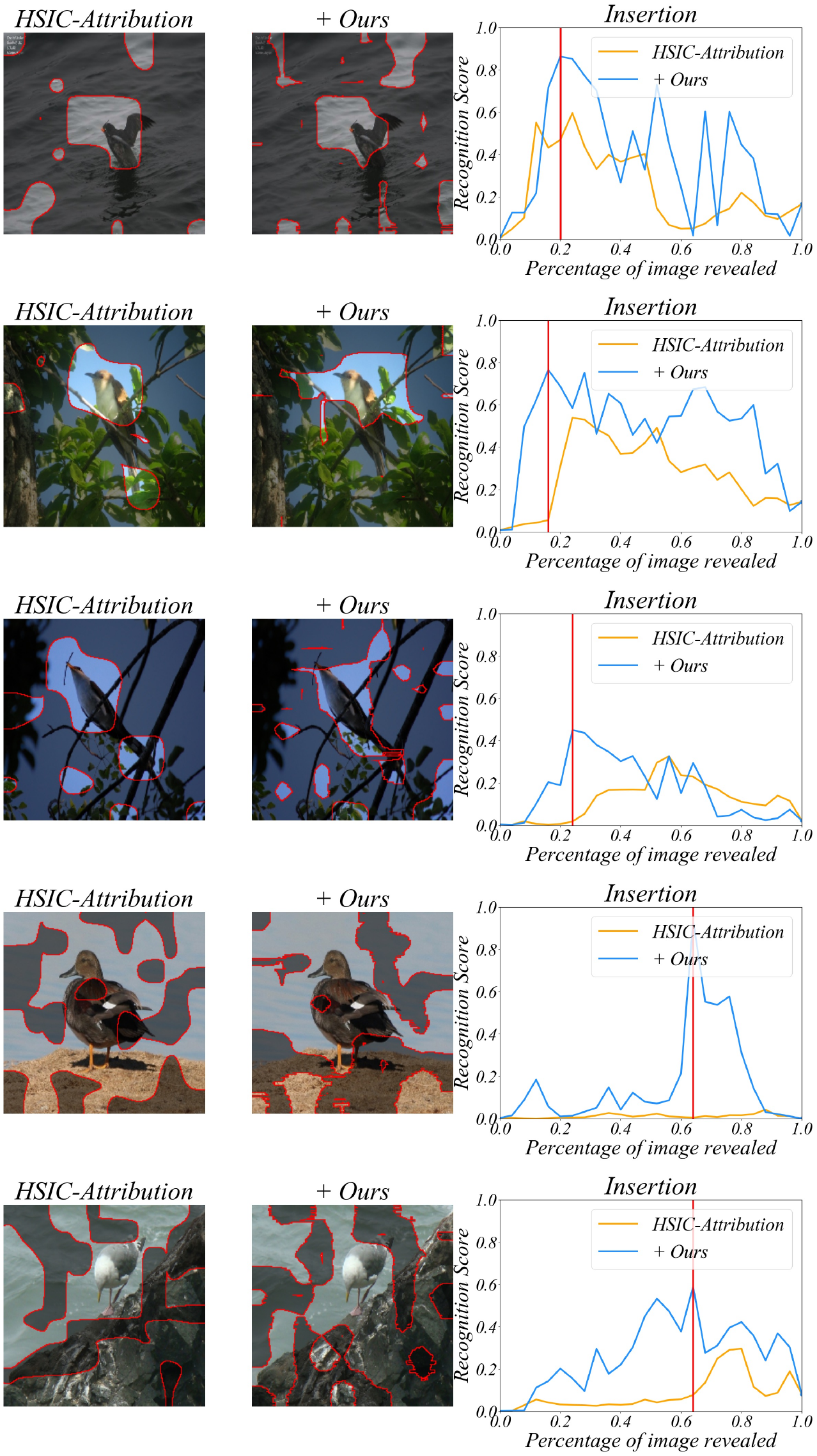}
    \caption{
        Visualization of the methods on CUB-200-2011 misclassified dataset.
    }
    \label{cub-fair-vis}
\end{figure}

\section{Limitation}
The main limitation of our method is the computation time depending on the sub-region division approach. Specifically, we first divide the image into different subregions, and then we use the greed algorithm to search for the important subset regions. To accelerate the search process, we introduce the prior maps, which are computed based on the existing attribution methods. However, we observe that the performance of our attribution method is based on the scale of subregions, where the smaller regions would achieve much better accuracy as shown in the experiments. There exists a trade-off between the accuracy and the computation time. In the future, we will explore a better optimization strategy to solve this limitation.

\end{document}